\definecolor{revisedtext}{rgb}{0.01,0.5,0.01}
\theoremstyle{plain}
\newtheorem{theorem}{Theorem}[section]
\newtheorem{corollary}[theorem]{Corollary}
\theoremstyle{definition}
\newtheorem{definition}[theorem]{Definition}
\theoremstyle{remark}
\newtheorem{aproposition}{Proposition}[section]
\newtheorem{alemma}{Lemma}[section]
\newcommand{\MSE}{\mathrm{MSE}}
\newcommand{\PSNR}{\mathrm{PSNR}}
\newcommand{\NCC}{\mathrm{NCC}} 
\newcommand{\var}{\mathrm{Var}}
\newcommand{\cov}{\mathrm{Cov}}
\newcommand{\ev}{\mathbb{E}}
\newcommand{\Bias}{\mathrm{Bias}}
\renewcommand{\maketitle}{
    \begin{center}
        % This rule should span the entire text width
        {\color{black}\rule{\linewidth}{0.5mm}}\par\vskip 0.5em
        
        % Title
        {\LARGE \textbf{\@title} \par}
        \vskip 1em
        
        % Separator line
        {\color{black}\rule{\linewidth}{0.25mm}}\par\vskip 0.5em
        
        % Author block
        \@author
        
        % Bottom horizontal line
        \vskip 0.5em\par{\color{black}\rule{\linewidth}{0.5mm}}
        \vskip 1em
    \end{center}
}
\begin{document}

\title{From Mean to Extreme: Formal Differential Privacy Bounds on the Success of Real-World Data Reconstruction Attacks
% *\\
% {\footnotesize \textsuperscript{*}Note: Sub-titles are not captured for https://ieeexplore.ieee.org  and
% should not be used}
% \thanks{Identify applicable funding agency here. If none, delete this.}
}
\author[1,3,x]{Anneliese Riess}
\author[2]{Kristian Schwethelm}
\author[2]{Johannes Kaiser}
\author[2]{Tamara T. Mueller}
\author[1,3,4]{Julia A. Schnabel}
\author[2,5]{Daniel Rueckert}
\author[2,*]{Alexander Ziller}
\affil[1]{Institute of Machine Learning in Biomedical Imaging, Helmholtz Munich, Neuherberg, Germany}
\affil[2]{Chair for AI in Healthcare and Medicine, Technical University of Munich (TUM) and TUM University Hospital, Munich, Germany}
\affil[3]{School of Computation, Information and Technology, Technical University of Munich, Munich, Germany}
\affil[4]{School of Biomedical Engineering and Imaging Sciences, King's College London, London, United Kingdom}
\affil[5]{Department of Computing, Imperial College London, London, United Kingdom}
\affil[x]{anne.riess@tum.de}
\affil[*]{alex.ziller@tum.de}
% \author{Anneliese Riess, Kristian Schwethelm, Johannes Kaiser, \\
% Tamara Mueller, Julia Schnabel, Daniel Rueckert, Alexander Ziller}
% \IEEEauthorblockN{1\textsuperscript{st} Given Name Surname}
% \IEEEauthorblockA{\textit{dept. name of organization (of Aff.)} \\
% \textit{name of organization (of Aff.)}\\
% City, Country \\
% email address or ORCID}
% \and
% \IEEEauthorblockN{2\textsuperscript{nd} Given Name Surname}
% \IEEEauthorblockA{\textit{dept. name of organization (of Aff.)} \\
% \textit{name of organization (of Aff.)}\\
% City, Country \\
% email address or ORCID}
% \and
% \IEEEauthorblockN{3\textsuperscript{rd} Given Name Surname}
% \IEEEauthorblockA{\textit{dept. name of organization (of Aff.)} \\
% \textit{name of organization (of Aff.)}\\
% City, Country \\
% email address or ORCID}
% \and
% \IEEEauthorblockN{4\textsuperscript{th} Given Name Surname}
% \IEEEauthorblockA{\textit{dept. name of organization (of Aff.)} \\
% \textit{name of organization (of Aff.)}\\
% City, Country \\
% email address or ORCID}
% \and
% \IEEEauthorblockN{5\textsuperscript{th} Given Name Surname}
% \IEEEauthorblockA{\textit{dept. name of organization (of Aff.)} \\
% \textit{name of organization (of Aff.)}\\
% City, Country \\
% email address or ORCID}
% \and
% \IEEEauthorblockN{6\textsuperscript{th} Given Name Surname}
% \IEEEauthorblockA{\textit{dept. name of organization (of Aff.)} \\
% \textit{name of organization (of Aff.)}\\
% City, Country \\
% email address or ORCID}
% }

\twocolumn[
  \begin{@twocolumnfalse}
    \maketitle
    % You could also include an abstract here, which would also span both columns
    % \begin{abstract}
    %   This is where the abstract text would go.
    % \end{abstract}
    % \vspace{1em}
  \end{@twocolumnfalse}
]

\begin{abstract}
The gold standard for privacy in machine learning, Differential Privacy (DP), is often interpreted through its guarantees against membership inference. 
However, translating DP budgets into quantitative protection against the more damaging threat of data reconstruction remains a challenging open problem. 
Existing theoretical analyses of reconstruction risk are typically based on an \say{identification} threat model, where an adversary with a candidate set seeks a perfect match. 
When applied to the realistic threat of \say{from-scratch} attacks, these bounds can lead to an inefficient privacy-utility trade-off.

This paper bridges this critical gap by deriving the first formal privacy bounds tailored to the mechanics of demonstrated Analytic Gradient Inversion Attacks (AGIAs). 
We first formalize the optimal from-scratch attack strategy for an adversary with no prior knowledge, showing it reduces to a mean estimation problem. 
We then derive closed-form, probabilistic bounds on this adversary's success, measured by Mean Squared Error (MSE) and Peak Signal-to-Noise Ratio (PSNR). 
Our empirical evaluation confirms these bounds remain tight even when the attack is concealed within large, complex network architectures.

Our work provides a crucial second anchor for risk assessment. 
By establishing a tight, worst-case bound for the from-scratch threat model, we enable practitioners to assess a \say{risk corridor} bounded by the identification-based worst case on one side and our from-scratch worst case on the other. 
This allows for a more holistic, context-aware judgment of privacy risk, empowering practitioners to move beyond abstract budgets toward a principled reasoning framework for calibrating the privacy of their models.

\end{abstract}

\begin{IEEEkeywords}
differential privacy, reconstruction risk
\end{IEEEkeywords}

\section{Introduction}
Machine learning crucially depends on access to domain-specific datasets.
However, in sensitive areas such as finance or medicine, these datasets are subject to privacy considerations.
State-of-the-art Analytic Gradient Inversion Attacks (AGIAs) can allow a malicious server or model provider to near-perfectly reconstruct sensitive training data \citep{boenisch2023curious,fowl2021robbing,feng2024privacy}.
These attacks are not merely theoretical; they are computationally inexpensive and require no special knowledge beyond the data's dimensionality, making them a practical and severe threat in real-world scenarios \citep{feng2024privacy, fowl2021robbing}.

\begin{figure}[t]
    \centering
    \includegraphics[width=0.99\linewidth]{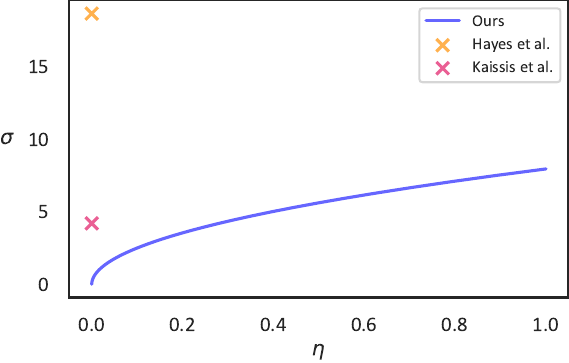}
    \caption{
    A comparison of the privacy cost required to defend against different reconstruction attack models. 
    The plot shows the amount of protective noise ($\sigma$, y-axis) needed to maintain a constant $10\%$ risk of successful reconstruction ($\gamma$), as a function of the allowable reconstruction error ($\eta$, x-axis). 
    Lower noise is better, as it leads to more accurate models.
    Our bound (blue line), for a \protect\say{from-scratch} attacker, illustrates the continuous trade-off between the allowed error and the necessary noise. 
    Unlike previous identification-based bounds \citep{hayes2023bounding, kaissis2023optimal} (markers), which prescribe a single, fixed level of privacy noise for the case of perfect reconstruction ($\eta=0$), our from-scratch based bound (blue line) introduces a continuous, two-parameter trade-off. 
    Practitioners are now empowered to explicitly set the threshold $\eta$ for what constitutes an \protect\say{acceptable} reconstruction and calibrate the noise accordingly -- enabling nuanced decisions based on actual privacy requirements and utility goals.
    % At the same time, the plot reveals a counterintuitive effect: for large values of $\eta$, more noise may be required to prevent successful from-scratch approximation than to prevent perfect identification from a finite set. 
    % This arises because, for permissive $\eta$, the volume of reconstructions close enough to the target becomes large; with high noise, the probability of a random guess achieving error below $\eta$ is non-negligible. 
    The ability to set $\eta$ gives practitioners both more choice -- and more responsibility -- to precisely define and defend their desired level of privacy.
    (Parameters: Reconstruction metric is $\MSE$, $N=1$, $\Delta=1$, $\kappa=1/11$.)
    }
    \label{fig:sigmaeta}
\end{figure}

While Differential Privacy (DP) is the gold standard for privacy protection \citep{dwork2014algorithmic}, its guarantees are most commonly understood in the context of mitigating membership inference attacks \citep{nasr2021adversary}. 
Translating DP budgets into explicit, quantitative protection against the far more damaging threat of data reconstruction remains a challenging open problem \citep{balle2022reconstructing, hayes2023bounding, kaissis2023optimal}.
A key reason for this is a fundamental mismatch between existing theoretical analyses and the nature of real-world attacks.
Prior analyses are typically built upon a worst-case, \textit{identification-based} threat model: they assume a powerful adversary who already possesses a candidate set of data and whose goal is to correctly match the privatized output to the right sample \citep{hayes2023bounding, kaissis2023bounding}.
While important for understanding worst-case scenarios, this identification-based model does not reflect the objective of AGIAs. 
These attacks aim to perform \textit{from-scratch approximation}, shifting the adversarial goal from merely \say{selecting} a record from a known candidate set to \say{synthesizing} the data record itself, which may have been previously unknown to the adversary.
A fundamental analytical gap arises when one attempts to measure the risk of an approximation attack using a bound designed for identification. 
Since identification-based bounds are binary -- they only measure the probability of a perfect match -- any reconstruction with even a single altered pixel is deemed a total failure. 
Consequently, to provide any guarantee against from-scratch attacks using these bounds, a practitioner is forced to be extremely conservative. 
This leads to an overestimation of the practical privacy risk and results in a sub-optimal privacy-utility trade-off. 
For instance, a hospital might be forced to degrade a diagnostic model's accuracy with excessive noise. 
This is simply because they lack the tools to quantify the realistic threat of an informative, but imperfect, reconstruction of a patient's data.

This paper bridges this critical gap. 
We provide the first theoretical guarantees against reconstruction attacks tailored to a practical, from-scratch threat model where the adversary has no prior knowledge of the input data beyond its structure. 
We formally analyze the optimal attack strategy for such an adversary and derive tight, closed-form bounds on the probability of their reconstruction error falling below a given threshold.
As illustrated in \Cref{fig:sigmaeta}, our analysis allows practitioners to make principled decisions, calibrating noise to mitigate this specific, real-world risk.

\noindent
Our main contributions are as follows:
\begin{itemize}
    \item We formally derive the optimal AGIA strategy under a from-scratch threat model and prove that it reduces the attack to a classical mean estimation problem.
    \item We provide closed-form theoretical bounds on the attack's success, measured by probabilistic guarantees on the Mean Squared Error (MSE) and Peak Signal-to-Noise Ratio (PSNR).
    \item We demonstrate that for any given level of reconstruction risk, calibrating DP noise using our tailored bounds allows for significantly less noise -- and thus better model utility -- compared to relying on existing worst-case bounds.
\end{itemize}
The paper is structured as follows. We begin with related work and preliminaries on DP (\Cref{sec::background}), followed by a formalization of AGIAs (\Cref{sec:inversionattack}). We then present our main theoretical results (\Cref{sec::results}), contextualize them with empirical analysis (\Cref{sec::context}), and conclude with a discussion of their implications and limitations (\Cref{sec::discussion}).

\section{Background}\label{sec::background}
\subsection{Related work}\label{sec::priorwork}
Our research is situated at the intersection of two lines of work: theoretical bounds on reconstruction risk under DP, and the empirical demonstration of practical gradient-based attacks.
Our work addresses this by moving into a twofold new terrain: we provide the first theoretical bounds specifically for AGIAs, and we do so in a framework that allows for imperfect reconstruction ($\eta>0$).

The foundational analyses of reconstruction risk bounds, pioneered by \citet{hayes2023bounding}, conceptualize the adversary's task as one of \textit{identification}.
In this worst-case threat model, the adversary is assumed to possess a candidate set containing the target data point and succeeds by correctly matching the privatized output to this known sample.
\citet{kaissis2023bounding} derived a closed-form analytical bound for this identification task, improving upon initial statistical estimation techniques.
This line of work, while providing crucial worst-case guarantees, is inherently limited by its threat model.
Because the attack is framed as a binary choice -- a successful match or a failure -- the resulting analyses are restricted to evaluating the probability of perfect reconstruction.
Within the $(\eta,\gamma)$-Reconstruction Robustness (ReRo) framework of \citet{balle2022reconstructing}, where $\eta$ is an error threshold and $\gamma$ is the probability of the attack's error falling below that threshold, this corresponds to the special case where the error threshold $\eta$ is zero.
This leaves open the question of quantifying the risk of \textit{imperfect but informative} reconstructions, where $\eta > 0$.
An extension by \citet{kaissis2023optimal} considers a relaxed setting inspired by Membership Inference Attacks (MIAs), but still presupposes an oracle that provides perfect reconstructions, limiting its direct applicability.

Subsequent work has begun to explore bounds for imperfect reconstruction ($\eta > 0$). Notably, \citet{guo2022bounding} derived semantic guarantees against reconstruction by leveraging R\'enyi-DP. 
However, their analysis is not tailored to a specific attack vector, and as noted by \citet{hayes2023bounding}, these general bounds can be too loose for practical application. 
In parallel, a separate body of research has demonstrated the practical viability of Analytic Gradient Inversion Attacks (AGIAs) \citep{boenisch2023curious,fowl2021robbing,feng2024privacy}. 
These attacks perform \textit{from-scratch approximation}: they do not require a candidate set and instead aim to directly reconstruct data from gradient information. 
A significant analytical gap thus exists: prior theoretical bounds are either focused on a mismatched identification threat model, or, as in the case of \citet{guo2022bounding}, are too general to provide tight estimates for the demonstrated threat of AGIAs.
The fundamental difference between these risk landscapes -- the linear nature of identification bounds versus the surface of approximation bounds -- is visualized in \Cref{fig:comparison} in \Cref{sec::context}.

Our work introduces a methodological shift to bridge this gap. 
Instead of starting with a general privacy definition, we begin by formalizing the mechanics of practical AGIAs, deriving the statistically optimal attack strategy, and then analyzing its error distribution to yield tight, closed-form privacy bounds.
Our analysis is therefore reverse-engineered from and calibrated to this more realistic class of from-scratch attacks. 
By doing so, we derive the first tight, closed-form bounds that are specifically designed for AGIAs, allowing practitioners to quantify the risk of imperfect reconstruction ($\eta>0$) in a manner that is directly relevant to this real-world threat.

\subsection{Differential Privacy}
Differential Privacy (DP) provides a rigorous mathematical framework for analyzing data while offering strong, provable guarantees on individual privacy \citep{dwork2014algorithmic}.
The core principle is to ensure plausible deniability: the output of a randomized algorithm $\mathcal{M}$ should be so similar when run on two adjacent datasets (differing by only one individual's data) that an adversary cannot confidently determine whether any single individual's information was included in the computation.
This protection is quantified by a privacy budget, most commonly expressed as an $(\varepsilon, \delta)$-tuple.

\begin{definition}[$(\varepsilon, \delta)$-Differential Privacy)]
A randomized mechanism $\mathcal{M}: \mathcal{X} \to \mathcal{R}$ satisfies $(\varepsilon, \delta)$-differential privacy if for all adjacent datasets $D, D' \in \mathcal{X}$ (differing in one entry) and for any subset of outputs $S \subseteq \mathcal{R}$, the following inequality holds:
$ \mathbb{P}[\mathcal{M}(D) \in S] \le e^\varepsilon \mathbb{P}[\mathcal{M}(D') \in S] + \delta $.
\end{definition}
Here, $\varepsilon$ is the \textit{privacy loss}, a measure of how much the output distribution can change due to a single individual's data; smaller values imply stronger privacy. 
The parameter $\delta$ is the \textit{failure probability}, representing a small chance that the $\varepsilon$ guarantee does not hold. 
It is typically set to a small value less than the inverse of the dataset size.

\subsubsection{Additive noise mechanisms}
A primary method for achieving DP is through an additive noise mechanism \citep{dwork2014algorithmic}.
The core idea is straightforward: the true output of a function $q$ is perturbed by adding calibrated random noise before it is released.
The amount of noise required is determined by the function's sensitivity ($\Delta$), which bounds the maximum possible change in the function's output when a single individual's data is added or removed from the dataset.
A higher sensitivity implies that the function is more responsive to the presence or absence of individual data points, and thus requires more noise to obscure their contribution.

While various noise distributions can be used (e.g., Laplace, Geometric \citep{dwork2006calibrating,ghosh2009universally}), our work focuses on the Gaussian mechanism and consequently the $\ell_2$-norm as a measure of sensitivity, as this combination is the standard for privacy-preserving deep learning with DP-Stochastic Gradient Descent (DP-SGD) \citep{song2013stochastic}.

\begin{definition}[Gaussian mechanism]
Given a function $q: \mathbb{R}^M\to\mathbb{R}^N$ with sensitivity $\Delta$, the Gaussian mechanism is given as 
$\mathcal{M}(q(x))=q(x)+ \xi$, 
where $\xi$ is a sample drawn from a zero-centered $N$-dimensional Gaussian distribution with variance $\sigma^2\Delta^2$, $\sigma$ is the noise multiplier, $\Delta$ the sensitivity of $q$ and $I$ is the $N$-dimensional identity matrix.
\end{definition}
The most common sensitivity $\Delta$ used for the Gaussian mechanism is the $\ell_2$-sensitivity:
\begin{definition}[$\ell_2$-Sensitivity]
The $\ell_2$-sensitivity of a function $q: \mathbb{R}^M \to \mathbb{R}^N$ is the maximum change in the $\ell_2$-norm of its output when applied to any two adjacent datasets $D, D' \in \mathcal{X}$:
$ \Delta(q) = \max_{D, D' \in \mathcal{X}} \|q(D) - q(D')\|_2 $
\end{definition}
In practice, for functions with unbounded outputs like neural network gradients, this maximum contribution cannot be computed. 
Instead, the sensitivity is enforced by first clipping the $\ell_2$-norm of the function's output to a predefined threshold $C$, which effectively sets $\Delta = C$. 
These steps form the foundation of DP-SGD \citep{song2013stochastic}, the standard algorithm for privacy-preserving training and the mechanism we analyze throughout this work.

\subsubsection{$\mu$-GDP}
An alternative and particularly elegant formulation of DP is Gaussian Differential Privacy (GDP) \citep{dong2022gaussian}. 
GDP is parameterized by a single value $\mu = \frac{\Delta}{\sigma}$, which directly captures the privacy guarantee of the Gaussian mechanism as a function of its noise level and sensitivity.
Any $\mu$-GDP guarantee can be converted to an $(\varepsilon, \delta)$-DP guarantee for any desired $\delta > 0$. 
Prior work has shown that $\mu$ connects naturally to the analysis of membership inference attack (MIA) risk, providing a direct interpretation of the privacy budget in that context \citep{kaissis2022unified}. 
While our work does not directly use the $\mu$-GDP formalism, it follows a similar spirit: a primary goal of our analysis is to provide an analogous, intuitive interpretation of the Gaussian mechanism's parameters, specifically the noise multiplier $\sigma$, in the context of reconstruction risk.

\subsubsection{Translating Privacy Budgets into Practical Risk: $(\eta, \gamma)$-Reconstruction Robustness}
While DP provides a formal privacy budget ($\varepsilon, \delta$), this abstract guarantee can be difficult to interpret in the context of a concrete threat like data reconstruction. A practitioner might ask: \say{Given my privacy settings, what is the actual probability that an adversary can reconstruct a high-fidelity image of my data?}

To bridge this gap between abstract theory and practical risk, \citet{balle2022reconstructing} introduced the notion of \textit{$(\eta, \gamma)$-Reconstruction Robustness (ReRo)}. ReRo provides a direct, interpretable statement about the success of any reconstruction attack. It is built on two key parameters:
\begin{itemize}
    \item $\eta$: \textit{The Error Threshold.} This value defines what counts as a \say{successful} reconstruction. For a given error function (e.g., MSE), any reconstruction with an error less than or equal to $\eta$ is considered a privacy breach. A practitioner can set this threshold based on their specific use case; for example, they might decide that any reconstructed image with a PSNR above 20 dB is informatively close to the original.
    \item $\gamma$: \textit{The Success Probability.} This value is an upper bound on the probability that \emph{any} adversary can achieve a reconstruction with an error below the threshold $\eta$.
\end{itemize}
A mechanism is therefore $(\eta, \gamma)$-Reconstruction Robust if it guarantees that the probability of any adversary successfully reconstructing data with an error of $\eta$ or less is no greater than $\gamma$. This is formalized as follows:
\begin{definition}[Definition 2 in \citet{balle2022reconstructing}]\label{def::rero}
    A randomised mechanism $\mathcal{M}: \mathcal{Z}^n\rightarrow\Theta$ is $(\eta, \gamma)$-reconstruction robust with respect to a prior $\pi$ over $\mathcal{Z}$ and a reconstruction error function $l: \mathcal{Z}\times\mathcal{Z}\rightarrow \mathbb{R}_{\geq 0}$ if for any dataset $D_{-}\in\mathcal{Z}^{n-1}$ and any reconstruction attack $R: \Theta\rightarrow\mathcal{Z}$:
    $$\mathbb{P}_{Z\sim\pi,\Theta\sim\mathcal{M}(D_{-}\cup\{Z\}) }[l(Z, R(\Theta))\leq \eta]\leq\gamma.$$
\end{definition}
In essence, ReRo provides a powerful and intuitive framework for translating the abstract randomness of a DP mechanism into a concrete, quantifiable guarantee against the success of reconstruction attacks. This is the framework we use throughout our analysis to make the practical implications of our theoretical bounds clear.

\section{Analytic Gradient Inversion Attacks}\label{sec:inversionattack}

\subsection{AGIAs: a canonical example}\label{sec:inversionattackstandard}

%When an adversary manipulates a deep learning model and leverages intermediate gradient updates to analytically reconstruct the input data, they are performing an analytic gradient inversion attack (AGIA).
Analytic gradient inversion attacks (AGIA) constitute the extraction of training data through the analytical inversion of model gradients.
% The threat model solely limits the capabilities of the adversary in accessing any potential training data. Consequently the adversary is capable to manipulate the underlying architecutre such that the analytical inversion of the gradients becomes easiest.
In its simplest form, the adversary merely \textit{prepends} a linear layer to the original network architecture, without replacing any existing components.
As a result, the input data is stored in the model's gradients, and the reconstruction is achieved \textit{without} any prior knowledge of the data.
This principle forms the basis of the works by \citet{boenisch2023curious,fowl2021robbing,feng2024privacy}.
In the following, we examine this canonical case.

Consider a network whose first layer is a fully connected linear layer $f: \mathbb{R}^N \rightarrow \mathbb{R}^{M}$ with weight matrix $W\in \mathbb{R}^{M\times N}$ and bias term $b\in \mathbb{R}^{M}$.
Formally, the operation of a linear layer on an input sample $X\in\mathbb{R}^N$ can be written as
\begin{equation}
    f(X) = WX+b. \label{eq::linearlayer}
\end{equation}
Typically, such a linear layer is succeeded by other neural network operations and a loss function, which we summarize in the term $g:\mathbb{R}^N\times \mathbb{R}^M\rightarrow\mathbb{R}$. 
Hence, $g$ describes the part of the network starting from the linear layer, and $g\left(X,f(X)\right)$ denotes the network's loss function output.
Note that this formulation applies to any architecture, where the reconstruction layer $f$ gets the input $X$.

As customary, all network parameters are updated according to the loss function during each training step.
For that purpose, the gradient of $g\left(X,f(X)\right)$ with respect to all network parameters is computed by a backward pass.
We call this gradient the \textit{global, concatenated gradient} and denote it by $G_X$.
Naturally, $G_X$ is dependent on the training step and on the input sample point $X$ of that specific training step, which is evident since model updates change from one training step to another.
However, to ease notation, we do not additionally index $G_X$ with the iteration step.

For a fixed training step, besides other model updates concerning the parameters of the part of the network given by $g$, for all $j \in \{1,...,M\}$, the adversary observes 
\begin{equation}
    \nabla_{W_j}g(X,f(X)) \in \mathbb{R}^N \quad \text{and} \quad \frac{\partial g(X,f(X))}{\partial b_j} \in \mathbb{R}, \label{eq::nonprivatemodelupdatesone}
\end{equation}
namely, the gradient of $g(X,f(X))$ with respect to $j$-th row of the matrix $W$ and the derivative of $g(X,f(X))$ with respect to the $j$-th entry of the bias term $b$, respectively. 
Moreover, the adversary is aware that the gradient and the derivative in \eqref{eq::nonprivatemodelupdatesone} are constructed by a backward pass in the following way:
\begin{align}
    \nabla_{W_j} g(X,f(X))
    &= \frac{\partial g(X,f(X))}{\partial f(X)_j} \nabla_{W_j} f(X)_j \notag\\
    &= \frac{\partial g(X,f(X))}{\partial f(X)_j} X \label{eq::backwardpassnonprivateone},
\end{align}
and
\begin{align}
    \frac{\partial g(X,f(X))}{\partial b_j} 
    &= \frac{\partial g(X,f(X))}{\partial f(X)_j}\frac{\partial f(X)_j}{\partial b_j} \notag\\
    &= \frac{\partial g(X,f(X))}{\partial f(X)_j},\label{eq::backwardpassnonprivatetwo}
\end{align}
for all $j  \in \{1,...,M\}$.
Note that the gradient in Equation \eqref{eq::backwardpassnonprivateone} is a scaled version of the input $X$ and that the multiplicative factor $\frac{\partial g(X,f(X))}{\partial f(X)_j}$ on the right-hand side of Equation \eqref{eq::backwardpassnonprivateone} equals the observed update with respect to the $j$-th entry of the bias given in Equation \eqref{eq::backwardpassnonprivatetwo}. 
Thus, if there exists $j'  \in \{1,...,M\}$, such that the update $\frac{\partial g(X,f(X))}{\partial b_{j'}} \neq 0$, the adversary can reconstruct the input sample $X$ analytically by performing
\begin{equation}
    \nabla_{W_{j'}} g(X,f(X)) \oslash \frac{\partial g(X,f(X))}{\partial b_{j'}} = X, \label{eq:reconfowl}
\end{equation}
where $\oslash$ denotes the entry-wise division. 
Computing \eqref{eq:reconfowl} is possible for all $j  \in \{1,...,M\}$ such that $\frac{\partial g(X,f(X))}{\partial b_j} \neq 0$.

%If the batch size $B$ equals one, only one input sample $X$ is used per iteration step, and the model updates are not contaminated with the information from different samples.
For completeness, we note the effect of the batch size $B$ on the attacks' success.
When the batch size $B > 1$, updates from multiple inputs are averaged together, causing them to overlap and making individual inputs difficult to reconstruct. 
While strategies by \citet{fowl2021robbing} and \citet{boenisch2023curious} attempt to mitigate this, the effect cannot be entirely prevented. 
Thus, a batch size of $B=1$ is the greatest privacy risk, as the model update is based on a single, \say{uncontaminated} input. 
Therefore, we intentionally proceed with our analysis by setting $B=1$ to bound the worst-case scenario from a privacy perspective.

\subsection{From-scratch Threat Model for AGIAs}\label{sec::threatmodel}
As we aim to construct provable guarantees against AGIA with DP, we choose our threat model to include all capabilities needed to perform state-of-the-art real-world AGIAs, e.\@g.\@, \citet{fowl2021robbing,boenisch2023curious,feng2024privacy}:

\textit{From-scratch Threat Model for AGIAs:}
Consider an adversary who is able to manipulate the deep learning setup. %(1) can manipulate the architecture of a model, (2) control its hyperparameters, and (3) modify the loss function either before or during training. 
The adversary can observe intermediate gradients during training or has access to the model after training. 
Further, the adversary has no access to or knowledge of the sensitive input dataset beyond its dimensionality.

\subsection{AGIAs under DP}\label{sec:inversionattackdp}

From now on, we consider training networks of the form $g(X,f(X))$, as stated in previous \Cref{sec:inversionattackstandard}, with DP to protect the training data against AGIAs and evaluate the provided protection.
In the context of neural networks, employing DP is usually achieved by training with DP-SGD \citep{song2013stochastic}.
DP-SGD privatizes the global gradient.
To that end, the noise must be calibrated to the sensitivity of the global gradient, which is not (necessarily) bounded and can be hard to compute.
Hence, the DP-SGD algorithm is based on two main steps: (1) Clipping the $\ell_2$-norm of the global, concatenated gradient to a predefined bound $C$ in order to have an artificial bound on the sensitivity and (2) adding calibrated, zero-centered Gaussian noise to the gradient. 
The hyperparameter $C$ is called the maximum gradient norm.

For a network $g(X,f(X))$, for a fixed iteration step, the global gradient $G_X$ has the following form:
\begin{equation}
    \begin{split}
        G_X = &\left[\nabla_{W_1}g(X,f(X))^T, \frac{\partial g(X,f(X))}{\partial b_1},...\right.\\ 
         &\left...., \nabla_{W_M}g(X,f(X))^T,
        \frac{\partial g(X,f(X))}{\partial b_M}, G_{X,P}^T \right]^T,  \label{eq::globalgradient}
    \end{split}
\end{equation}
where $G_{X,P}$ denotes the concatenated gradient of $g(X,f(X))$ with respect to the rest of the parameters of the network, where all vectors are unrolled to scalars.
To induce a bound on the norm of the global gradient $G_X$, it is multiplied by the clipping term: 
\begin{equation}
    \beta_C(X) := \frac{1}{\max\left(1, \frac{\Vert G_X\Vert_2}{C}\right)}. \label{eq::definitionbeta}
\end{equation}
$\beta_C(X)$ is dependent on the iteration step by definition, and it decreases with increasing norm of the global gradient $G_X$.

Then, a noise sample $\xi$ is drawn from a multivariate Gaussian distribution $\mathcal{N}(\textbf{0}, C^2\sigma^2I)$, where $I$ is the identity matrix and $\textbf{0}$ denotes the zero vector, both matching the dimension of $G_X$.
Ultimately, $\xi$ is added to the gradient $G_X$.

Under non-DP training (see \Cref{sec:inversionattackstandard}), the gradient in Equation \eqref{eq::backwardpassnonprivateone} stores a scaled version of the target $X$.
In contrast, when employing DP, the model updates are rescaled \textit{and} also perturbed by adding Gaussian noise as follows:
%have the following form:
\begin{alignat}{3}
    \widetilde{\nabla}_{W_j} 
    &:=s_j X + \xi_j, \quad 
    &&\xi_j \sim \mathcal{N}(\textbf{0}_N, C^2\sigma^2I_N), \label{eq::observationone}\\
    \widetilde{\nabla}_{b_j} 
    &:= s_j  + \xi_j', 
    &&\xi'_j \sim \mathcal{N}(0, C^2\sigma^2),\label{eq::observationtwo} 
\end{alignat}
for all $j  \in \{1,...,M\}$ and
\begin{equation}
    s_j := \beta_C(X)  \frac{\partial g(X,f(X))}{\partial f(X)_j}. \label{eq::definitionscalingfactor}
\end{equation}
Thus, for each $j  \in \{1,...,M\}$, Expression \eqref{eq::observationone} represents a \textit{privatized} version of $X$.  
It is easy to see that the introduced randomness impedes performing a simple division to recover the target $X$, as was possible when no DP was used (see \eqref{eq:reconfowl}).

Using the distribution of the noise, the noisy model updates in \eqref{eq::observationone} and \eqref{eq::observationtwo} can be expressed as samples from random variables in the following way
\begin{alignat}{2}
    \widetilde{\nabla}_{W_j} &\overset{d}{=} Y_j, \quad  &&Y_j\sim \mathcal{N}\left(s_j X, C^2\sigma^2I_N\right), \label{eq::distributiongradient1}\\
    \widetilde{\nabla}_{b_j} &\overset{d}{=} z_j,\;  &&z_j\sim \mathcal{N}\left(s_j, C^2\sigma^2\right),\label{eq::distributiongradient2}
\end{alignat}
for all $j \in \{1,...,M\}$, where $\overset{d}{=}$ means equal in distribution. 
Even though the adversary cannot perform the division in \eqref{eq:reconfowl} to reconstruct $X$, they can use the privatized global gradient, in particular, the observations $\widetilde{\nabla}_{W_1}, ..., \widetilde{\nabla}_{W_M}, \widetilde{\nabla}_{b_1},...,\widetilde{\nabla}_{b_M}$ and the knowledge about their distributions, as given in \eqref{eq::distributiongradient1} and \eqref{eq::distributiongradient2}, to design an estimator for the target $X$. 
Ultimately, this estimator serves as the reconstruction of $X$.
We highlight that $\widetilde{\nabla}_{W_1}, ..., \widetilde{\nabla}_{W_M}$ are $M$ observed, privatized versions of $X$ and $M$ can be chosen by the adversary since it denotes the number of rows of the matrix $W$ that specifies the linear layer $f$ (see Eq. \eqref{eq::linearlayer}). 
In \Cref{sec:optimalityattack}, we address parameter $M$'s importance for the estimator and, thus, for the attack's success. 

\subsection{The AGIA: a mean estimation problem}
\label{sec:optimalityattack}

Without making assumptions on the part of the network given by $g$, it is impossible to determine whether, for all or for some iteration steps, the part of the gradient denoted by $G_{X,P}$ (see Eq. \eqref{eq::globalgradient}) and, thereby, its privatized version contain usable information to estimate the target $X$. 
Therefore, we first focus on the privatized model updates given by $\widetilde{\nabla}_{W_1}, ..., \widetilde{\nabla}_{W_M}, \widetilde{\nabla}_{b_1},...,\widetilde{\nabla}_{b_M}$ to construct an analytically tractable estimator for $X$ and ignore the remaining part of the privatized global gradient. 
Later, we address the influence $G_{X,P}$ has on formulating an estimator for $X$.
In particular, in \Cref{propminimalvarianceestimatornew} and \Cref{prop::efficientestimatornew}, we specify and show the choice of $g$ that renders the analytic gradient inversion attack with the highest reconstruction success in terms of the $\MSE$.

Reconstructing the target $X$ using only the observations $\widetilde{\nabla}_{W_1},...,\widetilde{\nabla}_{W_M}$ is \say{not far from} solving a classical mean estimation problem.
However, the mean of the distributions of the samples $\widetilde{\nabla}_{W_1},...,\widetilde{\nabla}_{W_M}$ are not $X$ but rescaled versions of $X$, namely $s_1 X,..., s_M X$, respectively (see Eq. \ref{eq::distributiongradient1}).
Removing the dependency on the scaling factors $s_1,...,s_M$ is necessary for creating an unbiased estimator for $X$ such as the sample mean (see \Cref{prop::norealizableunbiasedestimator} in \Cref{sec::proofs}).
Since the adversary also observes noisy versions of these scaling factors, namely $\widetilde{\nabla}_{b_1},...,\widetilde{\nabla}_{b_M}$ (see \eqref{eq::distributiongradient2}), we can differentiate between two cases: one in which the adversary directly employs $\widetilde{\nabla}_{b_1},...,\widetilde{\nabla}_{b_M}$ to reconstruct $X$, and another one in which $\widetilde{\nabla}_{b_1},...,\widetilde{\nabla}_{b_M}$ are \textit{first} used to estimate the scaling factors, which are \textit{then} employed to estimate $X$.

In the former case, the adversary can divide entry-wise $\widetilde{\nabla}_{W_j}$ by $\widetilde{\nabla}_{b_j}$, for all $j\in \{1,...,M\}$ such that $\widetilde{\nabla}_{b_j}\neq 0$.
This approach mirrors the attack executed without DP constraints (see \Cref{sec:inversionattackstandard}).
The behavior of $\widetilde{\nabla}_{W_j}\oslash \widetilde{\nabla}_{b_j}$, $j\in \{1,...,M\}$, is determined by the distributions in \eqref{eq::distributiongradient1} and \eqref{eq::distributiongradient2} as follows: 
\begin{align}
    \widetilde{\nabla}_{W_j} \oslash \widetilde{\nabla}_{b_j}\overset{d}{=} V_{j}, \; V_{j,i} \sim \frac{\mathcal{N}\left(s_j x_i, C^2\sigma^2\right)}{\mathcal{N}\left(s_j,C^2\sigma^2\right)},   \label{eq::cauchydistribution}
\end{align} 
where $V_{j,i}$ denotes the $i$-th entry of $V_j$ and $V_{j,i}$ are pairwise independently distributed.
The ratio of two Gaussian distributions, as in \eqref{eq::cauchydistribution}, follows the Cauchy distribution.
Generally, this distribution has no defined statistical moments \citep{marsaglia2006ratios}, such as an expectation.
Although, under certain conditions these moments can be approximated \citep{marsaglia2006ratios,diaz2013existence}, no general statements about the behavior of the samples $\widetilde{\nabla}_{W_j}\oslash \widetilde{\nabla}_{b_j}$, $j\in \{1,...,M\}$, or the asymptotic behavior of estimators constructed using said samples, can be made if the statistical moments do not exist.
This is particularly problematic since the distribution in \eqref{eq::cauchydistribution} varies for all $j\in \{1,...,M\}$, $X\in \mathcal{D}$ and iteration steps.
Consequently, a \textit{broad} analysis is not feasible when \eqref{eq::cauchydistribution} is employed to estimate $X$.

However, alternatively, the adversary can use $\widetilde{\nabla}_{b_1},...,\widetilde{\nabla}_{b_M}$ to statistically estimate the scaling factors $s_1,...,s_M$ first and then, separately, use these estimators to rescale the observed privatized gradients \eqref{eq::observationone}, reformulate their distributions \eqref{eq::distributiongradient1} and solve the mean estimation problem for $X$.
Incidentally, the scaling factors can be approximated using strategies beyond statistical estimation.
In the simplest case, this can be achieved by imposing constraints on the data; for example, in the case of images, by assuming pixel values range from 0 to 255.

Errors in the estimation of the scaling factors $s_1, \ldots, s_M$ propagate, increasing the error in the subsequent estimation of $X$. 
Therefore, the best-case scenario for an attacker, and thus the upper bound on the attack's success, occurs when these scaling factors are estimated \textit{perfectly}. 
We proceed by analyzing this worst case from a privacy perspective, assuming the adversary has perfect knowledge of $s_1, \ldots, s_M$.

Let $X\in \mathcal{D}$ be a fixed reconstruction target input point, and also fix the training iteration step.
If $s_1,...,s_M$ are known, then the adversary can rescale the observed privatized gradients $\widetilde{\nabla}_{W_1}, ..., \widetilde{\nabla}_{W_M}$ (see \eqref{eq::observationone}), construct their sample mean $\hat{X}_M$, and use $\hat{X}_M$ as an unbiased estimator for $X$:
\begin{equation}
   \hat{X}_M := \frac{1}{M}\sum_{j=1}^M \frac{1}{s_j}\widetilde{\nabla}_{W_j}. \label{eq::firstdefestimator}
\end{equation}
Note that $\hat{X}_M$ is distributed as:
\begin{align}
    \hat{X}_M \sim \mathcal{N}\left( X, \frac{1}{M^2}\sum_{j=1}^M \frac{C^2\sigma^2}{s_j^2} I_N\right),\label{eq::distributionestimatornew}
\end{align}
because of how $\widetilde{\nabla}_{W_j}$, $j\in \{1,...,M\}$ are distributed (see \eqref{eq::distributiongradient1}). 
Performing the AGIA is, thus, in effect, equivalent to solving the mean estimation problem for $X$ using $\hat{X}_M$.

\section{Bounding the success of AGIAs}\label{sec::results}
%Having established that the optimal AGIA under our threat model reduces to a classical mean estimation problem, we now turn to bounding its success. 
This section forms the core of our theoretical contribution.

We move from the mean...: our analysis begins by formalizing the attack as a statistical estimation problem where the adversary's goal is to estimate the mean of a distribution, which corresponds to the true target point (\Cref{sec::optimalAGIAunderDP}).

...to the extreme: we then use this foundation to derive full probabilistic bounds on the error distribution, allowing us to quantify the risk of extreme outcomes -- that is, the probability of a reconstruction that is nearly perfect (\Cref{sec::AGIAsuccessmeasured}).

\subsection{Optimal Analytic Gradient Inversion Attack under DP} \label{sec::optimalAGIAunderDP}

Under our threat model (see \Cref{sec::priorwork}), the adversary is able to modify the model's architecture, hyperparameters, and loss function to facilitate their attack.
Therefore, they are capable of adjusting the neural network $g(X,f(X))$ to minimise the coordinate-wise variance of the sample mean $\hat{X}_M$ in \eqref{eq::distributionestimatornew}, aiming to increase the probability that $\hat{X}_{M}$ takes a value \say{close enough} to its expectation, namely to the target $X$.
The next result specifies the concrete choices of the network that render the sample mean $\hat{X}_M$ with the lowest variance:
\begin{restatable}{theorem}{propminimalvarianceestimatornew}\label{propminimalvarianceestimatornew}
    %Assume $\min_{X \in \mathcal{D}}\Vert X\Vert_2 >0$.
    If the part of the neural network given by $g$ is replaced by the loss function $\mathcal{L}:\mathbb{R}^N\times\mathbb{R}^M\to\mathbb{R}$ with $\mathcal{L}(X,f(X))= \textbf{1}_M^Tf(X)$, where $\textbf{1}_M$ is the $M$-dimensional 1-vector, and 
    \begin{align}
        M \geq \max\left( 1, \left\lceil\frac{C}{\min_{X \in \mathcal{D}\setminus\{\textbf{0}_N\}}\Vert X\Vert_2 }\right\rceil\right),\label{eq::choiceM}
    \end{align}
    where $\lceil \cdot \rceil:\mathbb{R} \to \mathbb{N}$ denotes the function that rounds up its argument to the nearest integer,
    then $\frac{1}{M^2}\sum_{j=1}^M \frac{C^2\sigma^2}{s_j^2}$ is minimal and takes the value $\sigma^2\Vert X\Vert_2^2$. 
\end{restatable}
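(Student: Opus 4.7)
The plan is to verify the two embedded claims in turn: that the specified choice of $(g, M)$ yields the value $\sigma^2\|X\|_2^2$, and that no alternative choice of $g$ yields a smaller value. The starting observation is that the target sum depends on $g$ only through the scalars $a_j := \partial g/\partial f(X)_j$ and the norm $\|G_X\|_2$, so the minimization can be carried out in two stages.

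First I would substitute $g = \mathcal{L}$ with $\mathcal{L}(X,f(X)) = \mathbf{1}_M^T f(X)$ into \eqref{eq::definitionscalingfactor}. Since $\partial \mathcal{L}/\partial f(X)_j = 1$ for every $j$, all scaling factors collapse to a common value $s_j = \beta_C(X)$, and the target sum simplifies to $C^2\sigma^2/\bigl(M\beta_C(X)^2\bigr)$. Next I would compute $\|G_X\|_2$ directly from \eqref{eq::globalgradient}: because $\nabla_{W_j}\mathcal{L} = X$ for each $j$ and $\mathcal{L}$ has no additional parameters (so $G_{X,P}$ vanishes), $\|G_X\|_2$ is a closed-form function of $M$ and $\|X\|_2$. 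The role of the lower bound on $M$ is to guarantee $\|G_X\|_2 \geq C$ uniformly over $X \in \mathcal{D}\setminus\{\mathbf{0}_N\}$, so that clipping is active and $\beta_C(X) = C/\|G_X\|_2$. Plugging this back into the simplified sum should yield the claimed value $\sigma^2\|X\|_2^2$.

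For the minimality claim, I would consider an arbitrary admissible $g$, write $a_j := \partial g/\partial f(X)_j$, and express the sum as $\frac{\sigma^2\|G_X\|_2^2}{M^2}\sum_{j=1}^M \frac{1}{a_j^2}$ in the clipping regime. The structural lower bound $\|G_X\|_2^2 \geq \|a\|_2^2\|X\|_2^2$ (isolating the $W$-portion of the concatenated gradient), combined with the Cauchy--Schwarz inequality $\|a\|_2^2 \cdot \sum_j 1/a_j^2 \geq M^2$, would produce the matching lower bound $\sigma^2\|X\|_2^2$. Equality in Cauchy--Schwarz requires $|a_j|$ to be constant across $j$, while equality in the norm bound requires the bias and remainder contributions to vanish; both are simultaneously realized by the prescribed $\mathcal{L}$, certifying that the minimum is attained.

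The main obstacle I expect is the coupling between the individual scalings $s_j$ and the global clipping factor $\beta_C(X)$, which itself depends on $g$ through $\|G_X\|_2$: $\beta_C$ rewards making $\|a\|_2$ \emph{small}, while the sum $\sum 1/a_j^2$ rewards making the $a_j$ \emph{large}, and the two pressures interact nontrivially. The key insight to disentangle them is that in the clipping regime the factor $\beta_C^{-2}$ absorbs the overall scale $\|a\|_2^2$, leaving only a scale-invariant distributional term $\|a\|_2^2\sum 1/a_j^2$ that is minimized by uniformity. A secondary technical point is the boundary case at the clipping threshold: for $M$ below the stated bound, clipping is inactive and the variance scales as $C^2\sigma^2/M$, so one must verify that the transition into the clipping regime is precisely where the variance saturates at the minimum, justifying the particular form of the bound on $M$.
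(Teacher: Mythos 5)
Your proposal is correct and follows essentially the same route as the paper's proof: decompose $\Vert G_X\Vert_2^2$ into the $W$-portion $\Vert a\Vert_2^2\Vert X\Vert_2^2$ plus a remainder that must vanish, show uniformity of the $a_j$ is optimal, and use $\beta_C(X)^2\Vert G_X\Vert_2^2\leq C^2$ with the lower bound on $M$ forcing the clipping regime so that equality is attained. The only cosmetic difference is that you invoke Cauchy--Schwarz ($\Vert a\Vert_2^2\sum_j 1/a_j^2\geq M^2$) where the paper solves the equivalent constrained minimization $\min\sum_j 1/y_j$ subject to $\sum_j y_j$ fixed in a separate auxiliary proposition.
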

\begin{proof}
    See \Cref{sec::proofs}.
\end{proof}
Before addressing the significance of \Cref{propminimalvarianceestimatornew}, note that in practice, the maximum gradient norm $C$ is not typically dependent on $\min_{X \in \mathcal{D}\setminus\{\textbf{0}_N\}}\Vert X \Vert_2$. 
For instance, $C=1$ is a common choice for classification tasks or whenever practitioners want to simplify their calculations (e.g., \citet{de2022unlocking}).

Under the conditions in \Cref{propminimalvarianceestimatornew}, the sample mean $\hat{X}_{M'}$ has the minimum possible variance and is distributed as follows:
\begin{equation}
    \hat{X}_{M'} \overset{d}{=} \hat{X}, \quad \text{for} \quad \hat{X} \sim \mathcal{N}\left( X, \sigma^2 \Vert X\Vert_2^2 I_N\right), \label{eq::lastestimatorplease}
\end{equation}
for all $M'$ satisfying \eqref{eq::choiceM}.
For simplicity, we will now refer to $\hat{X}_{M'}$ as $\hat{X}$.
This estimator $\hat{X}$ \eqref{eq::lastestimatorplease} is independent of the clipping norm $C$ and its variance is \textit{exactly} calibrated to the norm of the target $X$.
From the adversary's perspective, carefully choosing $M$ according to \eqref{eq::choiceM} effectively prevents the addition of excess noise that would arise if the total gradient norm were smaller than $C$. 
However, the DP mechanism still ensures privacy because the noise multiplier $\sigma$ provides \say{just enough} protection to guarantee that $\hat{X}$ retains a non-negligible level of variability. 
Therefore, the estimate $\hat{X}$ does not converge in probability to the true target $X$.
\Cref{fig:influence_M} illustrates how $M$ impacts the reconstruction's success.

We now return to the gradient component $G_{X,P}$, which was set aside in the preceding analyses as explained at the beginning of \Cref{sec:optimalityattack}.
Due to the nature of the clipping term $\beta_C(X)$ \eqref{eq::globalgradient}, any non-usable information contained in $G_{X,P}$ contributes to the global gradient's norm.
This increases the variance of the estimator $\hat{X}$ without providing any actionable information, ultimately hindering the attack's success (see proof of \Cref{propminimalvarianceestimatornew} in \Cref{sec::proofs}).
Moreover, usable information contained in $G_{X,P}$ can be optimally incorporated into the mean estimation problem if it is a privatized version of the input $X$ as in \eqref{eq::observationone}, making $G_{X,P}$ the gradient of a linear layer.
Thus, the (insertion of the) linear layer and choosing $M$ as in \Cref{propminimalvarianceestimatornew} makes $G_{X,P}$ redundant at best.
Even if $G_{X,P}$ contains usable information, an adversary achieves the most statistically efficient reconstruction by discarding $G_{X,P}$ and instead focusing on maximizing the information from the inserted linear layer.
In other words, the following corollary is a direct consequence of \Cref{propminimalvarianceestimatornew}: 
\begin{corollary}
    Choosing $G_{X,P}\neq\textbf{0}$ can only decrease the reconstruction quality by adding to the gradient norm and triggering the addition of \say{overproportional} noise. 
\end{corollary}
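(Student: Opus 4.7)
The plan is to reduce the corollary to a variance comparison: I will show that the sample mean $\hat{X}_M$ has strictly larger (or at best equal) variance when $G_{X,P}\neq\mathbf{0}$ than when $G_{X,P}=\mathbf{0}$, while the reconstruction still relies on the same coordinates of the global gradient. Since the variance attained in Theorem \ref{propminimalvarianceestimatornew} is the minimum over all networks of the form $g(X,f(X))$, it suffices to track how the value $\tfrac{C^2\sigma^2}{M^2}\sum_{j=1}^M 1/s_j^2$ depends on $G_{X,P}$ once the hypotheses of the theorem are in force.

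The key step is to decompose the squared norm of the global gradient as
\begin{equation*}
\|G_X\|_2^2 \;=\; \sum_{j=1}^M \|\nabla_{W_j} g(X,f(X))\|_2^2 \;+\; \sum_{j=1}^M \Bigl(\tfrac{\partial g(X,f(X))}{\partial b_j}\Bigr)^2 \;+\; \|G_{X,P}\|_2^2,
\end{equation*}
so that the contributions of the linear layer and of $G_{X,P}$ add in an orthogonal, additive way. Under the conditions of Theorem \ref{propminimalvarianceestimatornew}, the first two sums already push $\|G_X\|_2$ to at least $C$, placing $\beta_C(X)$ in the clipping regime $\beta_C(X) = C/\|G_X\|_2$. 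A nonzero $G_{X,P}$ therefore strictly increases $\|G_X\|_2$ and strictly decreases $\beta_C(X)$. Since the factors $\partial g(X,f(X))/\partial f(X)_j$ are determined by the part of $g$ acting on the linear layer's output and are unaffected by whether the remaining parameters are exercised, the scaling factors $s_j = \beta_C(X)\cdot\partial g(X,f(X))/\partial f(X)_j$ shrink proportionally in $\beta_C$. Plugging into the variance of $\hat{X}_M$ in \eqref{eq::distributionestimatornew}, each $1/s_j^2$ grows, so the variance strictly exceeds the minimum $\sigma^2\|X\|_2^2$ from Theorem \ref{propminimalvarianceestimatornew}. The noise variance $C^2\sigma^2$ added per coordinate is unchanged, so the signal-to-noise ratio of the observed $\widetilde{\nabla}_{W_j}$ deteriorates — this is precisely the "overproportional noise" in the statement.

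The main obstacle is ruling out the possibility that $G_{X,P}$ contributes \emph{usable} information that could offset this variance inflation, e.g.\ by enabling an alternative estimator that strictly outperforms $\hat{X}_M$. I will dispose of this by invoking the remark preceding the corollary: any usable information that $G_{X,P}$ carries about $X$ would, for an optimal attacker, have to take the form of a further privatized linear observation of $X$ as in \eqref{eq::observationone}, and such an observation can be absorbed into the prepended linear layer by choosing a slightly larger $M$. Hence without loss of statistical efficiency we may assume $G_{X,P}$ contributes nothing informative, so its only net effect is the norm inflation analyzed above, and reconstruction quality can only degrade.
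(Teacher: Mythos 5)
Your proposal is correct and follows essentially the same route as the paper: the paper's official proof is just ``follows directly from Theorem~\ref{propminimalvarianceestimatornew},'' and your argument — the additive decomposition of $\Vert G_X\Vert_2^2$, the resulting decrease of $\beta_C(X)$ in the clipping regime inflating $\sum_j 1/s_j^2$ above the minimum $\sigma^2\Vert X\Vert_2^2$, and the absorption of any usable information in $G_{X,P}$ into an enlarged linear layer — is precisely the reasoning the paper gives in the prose preceding the corollary and in the proofs of Theorem~\ref{propminimalvarianceestimatornew} and its auxiliary proposition. You simply make explicit what the paper leaves as a one-liner, including flagging (as the paper also only informally dispatches) the possibility that $G_{X,P}$ carries exploitable information.
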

\begin{proof}
    Follows directly from \Cref{propminimalvarianceestimatornew}.
\end{proof}

Additionally, \Cref{propminimalvarianceestimatornew}'s network modifications lead to $\frac{\partial g(X,f(X))}{\partial f(X)_j}=\frac{\partial \mathcal{L}(X,f(X))}{\partial f(X)_j}=1$ and
\begin{equation*}
    s_j = \beta_C(X) \qquad \forall j \in \{1,...,M\}.
\end{equation*}
%and $s_j = \beta_C(X)$ for all $j \in \{1,...,M\}$.
Thus, the approximation of $s_1,...,s_M$ presented in \Cref{sec:inversionattackdp} is reduced to the estimation of a single scale, namely $\beta_C(X)$. 
% Moreover, $\beta_C(X)$ has a specific form:
% \begin{equation*}
%     \beta_C(X)=\begin{cases}
%         \frac{C}{\Vert X \Vert_2},\quad &\text{if}\;  C\leq \min\limits_{X \in \mathcal{D}\setminus\{\textbf{0}_N\}}\Vert X\Vert_2,\\
%         \frac{\min_{X \in \mathcal{D}\setminus\{\textbf{0}_N\}}\Vert X\Vert_2}{\Vert X\Vert_2},  &\text{otherwise.}
%     \end{cases}
% \end{equation*} 
And, most importantly, $\hat{X}$ is a sample average (constructed with independently and identically distributed observations) that estimates the expectation of a multivariate normally distributed random variable. 
Therefore, the following result also immediately follows from \Cref{propminimalvarianceestimatornew}:

\begin{restatable}{corollary}{propefficientestimatornew} \label{prop::efficientestimatornew}
    $\hat{X}$ is a minimum variance unbiased estimator (MVUE) for the target $X$.
    Thus, using the $\MSE$ as an optimality criterion, $\hat{X}$ is the best achievable estimator for the target point $X$.
\end{restatable}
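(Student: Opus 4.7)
The plan is to reduce the claim to a textbook result from classical estimation theory by exploiting the simplified observation model established in \Cref{propminimalvarianceestimatornew}. Under that theorem's network choices, all scaling factors collapse to a common value $s_j = \beta_C(X)$, so the privatized gradients $\widetilde{\nabla}_{W_1}, \ldots, \widetilde{\nabla}_{W_M}$ are i.i.d.\ draws from $\mathcal{N}(\beta_C(X) X, C^2\sigma^2 I_N)$. Combined with the worst-case assumption from \Cref{sec:optimalityattack} that the adversary knows $\beta_C(X)$, this places us in a classical multivariate Gaussian location model with \emph{known} covariance, and $\hat{X}$ is nothing other than a sample mean after a known linear rescaling.

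The first step is to verify unbiasedness, which follows immediately from linearity of expectation: $\ev[\hat{X}] = X$. The second step is to invoke the Lehmann--Scheff\'e theorem. Because multivariate Gaussians with known covariance form a full-rank exponential family in the mean parameter, the statistic $T := \sum_{j=1}^M \widetilde{\nabla}_{W_j}$ is complete and sufficient for $X$, and $\hat{X}$ is an unbiased function of $T$. Hence $\hat{X}$ is uniformly of minimum variance among unbiased estimators, i.e.\ a MVUE. Equivalently, a direct Fisher-information computation shows that the covariance $\sigma^2\Vert X\Vert_2^2 I_N$ from \eqref{eq::lastestimatorplease} exactly saturates the (vector) Cram\'er--Rao lower bound, giving a second, self-contained route to the same conclusion.

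To pass from MVUE to MSE-optimality, I would note that for any unbiased estimator $T$ of $X$, $\MSE(T) = \ev[\Vert T - X\Vert_2^2] = \mathrm{tr}(\cov(T))$. The L\"owner-order variance minimality delivered by Lehmann--Scheff\'e therefore implies trace minimality of the covariance, which is precisely MSE-minimality within the unbiased class. This is the sense of ``best achievable'' in the corollary's statement, and it is the strongest such guarantee one can give without restricting the parameter space (biased shrinkage estimators can, of course, beat the sample mean in pointwise MSE in certain regimes, but this lies outside the worst-case adversarial framing of our threat model).

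I do not anticipate serious obstacles; the one subtlety is that the observation distribution depends on $X$ through $\beta_C(X)$, which would derail the standard MVUE machinery if left unaddressed, since the model would no longer be a clean location family. This is precisely why the worst-case privacy assumption of \Cref{sec:optimalityattack} is needed: granting the adversary perfect knowledge of $\beta_C(X)$ reduces the problem to a genuine Gaussian location family with known covariance, at which point Lehmann--Scheff\'e (or, equivalently, Cram\'er--Rao) applies verbatim.
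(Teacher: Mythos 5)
Your proposal is correct, and it contains the paper's own argument as one of its two routes. The paper proves the corollary via an auxiliary lemma (\Cref{prop::efficientestimatorentrywisenew}) showing that each coordinate $\hat{x}_i$ is unbiased and attains the scalar Cram\'er--Rao bound $\var_X(\hat{x}_i)\geq I(x_i)^{-1}=\sigma^2\Vert X\Vert_2^2$, and then applies the bias--variance decomposition $\ev_X[\MSE_X(X,\hat{Y})]=\frac{1}{N}\sum_i\left(\Bias_{x_i}(x_i,\hat{y}_i)^2+\var_{x_i}(\hat{y}_i)\right)$ to conclude that among unbiased estimators the expected $\MSE$ equals the average of the coordinate variances, which $\hat{X}$ minimises; this is exactly your ``second, self-contained route'' via Fisher information together with your $\MSE(T)=\mathrm{tr}(\cov(T))$ step. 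Your primary route via Lehmann--Scheff\'e (completeness and sufficiency of $\sum_j\widetilde{\nabla}_{W_j}$ in the full-rank Gaussian exponential family with known covariance) is genuinely different and in one respect stronger: it delivers the UMVUE property without requiring that the Cram\'er--Rao bound be attained, and it yields uniqueness of the UMVUE for free. Its one cost is that completeness requires the mean parameter to range over a set containing an open subset of $\mathbb{R}^N$, whereas the pointwise Cram\'er--Rao argument the paper uses only needs standard regularity at the fixed target $X$; since the paper works with a fixed but arbitrary $X$, the two are equally adequate here. You also correctly isolate the subtlety the paper handles implicitly -- that the observation model depends on $X$ through $\beta_C(X)$ and is only a clean location family once the worst-case assumption of known scaling factors is granted -- and your caveat that biased shrinkage estimators can beat $\hat{X}$ in pointwise $\MSE$ is an honest qualification of ``best achievable'' that the paper's statement elides (its optimality claim is, strictly, optimality within the unbiased class).
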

\begin{proof}
    See \Cref{sec::proofs}.
\end{proof}
A MVUE is desirable because it is the most \textit{efficient} estimator: it is unbiased and achieves the Cram\'{e}r-Rao bound, i.\@e.\@, it has the lowest variability in terms of its variance compared to all other estimators for $X$ constructed with the same observations $\widetilde{\nabla}_{W_1}, ..., \widetilde{\nabla}_{W_M}$.
Moreover, such an estimator achieves the lowest expected mean squared error ($\MSE$).
In our case, a simple computation delivers: 
    \begin{equation} \label{eq::expectedmseexact}
        \ev_X[\MSE_X(X, \hat{X})] =  \sigma^2 \Vert X\Vert_2^2.
    \end{equation}
In this work, we utilize the $\MSE$ as an optimality criterion, because $\MSE_X(X, \hat{X}) = 0$ if and only if $\hat{X}$ perfectly matches the target $X$.
Therefore, by analyzing and lower-bounding the error between $X$ and $\hat{X}$, we can effectively upper-bound the adversary's success.
In particular, by \Cref{propminimalvarianceestimatornew} and \Cref{prop::efficientestimatornew}, $\hat{X}$ is the MVUE with the lowest expected $\MSE$ and variability in terms of the variance compared to all other possible MVUE obtained under other choices regarding the model's architecture, hyperparameters, and loss function.
Therefore, \Cref{propminimalvarianceestimatornew} presents the \textit{optimal attack strategy} for AGIAs, and $\hat{X}$ \eqref{eq::lastestimatorplease} is the optimal reconstruction in terms of the $\MSE$ under the assumed threat model (see \Cref{sec::threatmodel}).

The implications of the optimal AGIA can be understood as modifications to the underlying DP mechanism.
Specifically, the optimal AGIA alters both the function being privatized and the noise calibration.
Recall that a single iteration of DP-SGD is equivalent to one execution of a Gaussian mechanism $\mathcal{M}$ with appropriate privacy parameters, as detailed in \Cref{sec:inversionattackdp}. 
When an adversary executes the optimal AGIA, $\mathcal{M}$ simplifies to:
\begin{equation}
    \mathcal{M}(q(X)) = \frac{1}{\Vert X\Vert_2}X + \xi,\;\;\; \xi \sim \mathcal{N}\left(0, \sigma^2 I_N\right),
    \label{eq::modifieddpmechanism}
\end{equation}
where $q:\{\mathcal{X}\}\to \{\mathcal{X}\}$ is a function that takes only one point $X$ as input and outputs its normalized version. 
This leads us to the last implication of \Cref{propminimalvarianceestimatornew}:
\begin{corollary} \label{cor::agiaequivdpmech}
    The optimal AGIA in \Cref{propminimalvarianceestimatornew} corresponds to a reconstruction attack performed on the output of the Gaussian mechanism $\mathcal{M}$ in \eqref{eq::modifieddpmechanism}.
\end{corollary}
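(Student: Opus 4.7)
The plan is to show the claimed correspondence by matching the distribution of the sufficient statistic on which the optimal AGIA operates to the distribution of the mechanism output $\mathcal{M}(q(X))$, up to a deterministic rescaling by $\|X\|_2$. The equivalence is then a statement about the two reconstruction problems being indistinguishable in their statistical content.

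First, I would invoke \Cref{propminimalvarianceestimatornew} and \Cref{prop::efficientestimatornew} to identify the object the optimal adversary ultimately reasons about. Under the prescribed network modification (the one-vector loss and $M$ chosen as in \eqref{eq::choiceM}), every scaling factor coincides, $s_j = \beta_C(X)$, and the sample mean $\hat{X}$ in \eqref{eq::firstdefestimator} is a minimum variance unbiased (hence sufficient) statistic for $X$ built from $\widetilde{\nabla}_{W_1},\ldots,\widetilde{\nabla}_{W_M}$. From \eqref{eq::lastestimatorplease} we have
\begin{equation*}
    \hat{X}\sim\mathcal{N}\!\left(X,\,\sigma^{2}\|X\|_{2}^{2}\,I_{N}\right).
\end{equation*}

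Second, I would read off the law of the candidate mechanism output: by \eqref{eq::modifieddpmechanism},
\begin{equation*}
    \mathcal{M}(q(X))=\frac{X}{\|X\|_{2}}+\xi,\qquad \xi\sim\mathcal{N}(0,\sigma^{2}I_{N}),
\end{equation*}
so $\mathcal{M}(q(X))\sim\mathcal{N}(X/\|X\|_{2},\sigma^{2}I_{N})$. The deterministic map $y\mapsto y/\|X\|_{2}$ sends a draw distributed as $\hat{X}$ to a draw distributed as $\mathcal{M}(q(X))$, and its inverse $y\mapsto\|X\|_{2}\,y$ runs the transport in the opposite direction. Because this bijection is a fixed measurable transformation (once the scalar $\|X\|_{2}$ is viewed as a parameter determined by $X$), any reconstruction procedure $R$ acting on $\mathcal{M}(q(X))$ induces an equally successful reconstruction acting on $\hat{X}$ by pre-composing with the scaling, and vice versa. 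Therefore the distribution of the reconstruction error is identical under both observations, which is precisely the sense in which the optimal AGIA ``corresponds'' to a reconstruction attack on $\mathcal{M}(q(X))$.

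The main obstacle I anticipate is disentangling the notion of ``correspondence.'' The bijection above is parameterised by $\|X\|_{2}$, which the adversary does not know a priori; one must be careful that the claim is about the equivalence of the two release problems from the analyst's perspective (the designer calibrating privacy), not about the adversary being able to execute the rescaling in practice. This is coherent with the worst-case posture already taken in \Cref{sec:optimalityattack}, where the scaling factor $\beta_{C}(X)$ is granted to the adversary exactly; combined with the fixed $M$ from \eqref{eq::choiceM} and the identity for $\beta_{C}(X)$ derived in the proof of \Cref{propminimalvarianceestimatornew}, $\|X\|_{2}$ is likewise accessible in the analysis. The corollary then follows as an immediate restatement of the distributional identity established in the second step.
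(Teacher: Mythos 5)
Your proposal is correct and takes essentially the same route as the paper, whose proof is simply the observation that \eqref{eq::lastestimatorplease} and \eqref{eq::modifieddpmechanism} describe the same Gaussian law up to the deterministic rescaling by $\Vert X\Vert_2$; you spell out that rescaling bijection and the resulting equivalence of reconstruction problems explicitly. Your added remark that $\Vert X\Vert_2$ is accessible only under the worst-case assumption of perfectly known scaling factors is a correct and worthwhile clarification that the paper leaves implicit.
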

\begin{proof}
    Follows directly from \Cref{propminimalvarianceestimatornew} and \eqref{eq::lastestimatorplease}.
\end{proof}
This ability for an adversary to manipulate the effective privacy mechanism by engineering the model architecture is visualized in \Cref{fig:influence_M}.

\begin{figure}
    \centering
    \includegraphics[width=\linewidth]{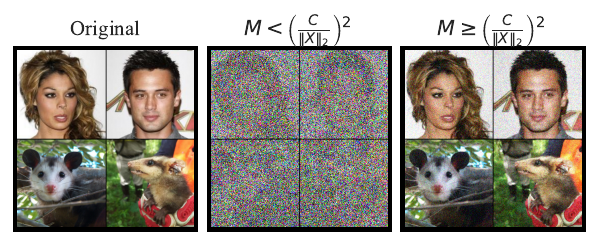}
    \caption{
    % Demonstration of how an adversary can enforce obtaining the least amount of noise on the reconstruction for constant privacy parameters by adjusting the parameter $M$. 
    % If they increase $M$ such that $M\geq\left(\frac{C}{\Vert X\Vert_2}\right)$, they will get the least amount of additive noise (right column).
    % If they do not exceed this threshold, the additive noise is stronger (\protect\say{overproportional}) for the same privacy parameters $C$ and $\sigma$ (middle column).
    % Middle column: $M=1$, right column: $M=1000$. All other parameters remain constant for both reconstructions: $\sigma=5\cdot10^{-4}$, $C=5.0\cdot10^{3}$, $N=150528$.
    Demonstration of an adversary's ability to manipulate the privacy mechanism by engineering the model architecture. 
    The columns show reconstructions under fixed privacy parameters ($\sigma$, $C$) but different adversarial choices for the model's architecture in terms of $M$.
    \textbf{Middle}: When the adversary chooses a small $M$ such that $M < (C / \Vert X \Vert_2)$, gradient clipping is triggered on the full gradient, and the reconstruction is obscured by strong additive noise.
    \textbf{Right}: By choosing a large $M$ such that $M \geq (C / \Vert X \Vert_2)$, the adversary forces the DP mechanism to calibrate noise to a much smaller sensitivity. This results in a significantly clearer reconstruction for the same privacy budget.
    (Parameters: $\sigma=5\cdot10^{-4}$, $C=5.0\cdot10^{3}$. Middle column uses $M=1$; right column uses $M=1000$. Images from CelebA \citep{liu2015faceattributes} and INaturalist \citep{inaturalist} (under CC0-license).)
    }
    \label{fig:influence_M}
\end{figure}

\subsection{AGIA success measured by $\MSE$, $\PSNR$ and ReRo.} \label{sec::AGIAsuccessmeasured}

As a consequence of the previous section, the optimal reconstruction $\hat{X}$ obtained by an adversary performing an AGIA is inherently stochastic. 
Accordingly, the success of this reconstruction -- quantified via $\MSE$ and $\PSNR$ -- can be bounded in probability. 
By extension, the probability of a successful reconstruction \textit{for any} analytic gradient-based inversion attacks \textit{within} our defined threat model, as well as for all \textit{weaker} threat models, can also be bounded.

Fixing the target $X$, $\MSE_X(X, \hat{X})$ can be formulated as a random variable determined by the randomness of $\hat{X}$:
\begin{restatable}{theorem}{distrmsefirstcase} \label{prop::distrmsefirstcase}
        \begin{equation*}
             \MSE_X(X, \hat{X}) \overset{d}{=}  \frac{\sigma^2\Vert X\Vert_2^2}{N}\cdot Y \quad \text{with} \quad Y \sim \chi^2_N,
        \end{equation*}
    where $\chi^2_N$ denotes the central chi-squared distribution with $N$ degrees of freedom.
    In particular, for $\eta$ given, 
    \begin{align} \label{eq::almostrero}
         \mathbb{P}_{\hat{X}}(\MSE_X(X,\hat{X}) \leq \eta) = \Gamma_R\left(\frac{N}{2}, \frac{N \eta}{2\sigma^2\Vert X\Vert_2^2}\right),
    \end{align} 
    where $\Gamma_R$ is the regularized gamma function.
\end{restatable}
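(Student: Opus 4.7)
The proof is essentially a direct computation, since the heavy lifting has already been done in establishing the distribution of $\hat{X}$ in equation \eqref{eq::lastestimatorplease}. The plan is to recognize $\MSE_X(X,\hat{X})$ as a scaled squared norm of a standard Gaussian vector, which yields a scaled chi-squared distribution, and then invoke the standard relationship between the chi-squared CDF and the regularized lower incomplete gamma function.

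First, I would unpack the definition of the reconstruction MSE as $\MSE_X(X,\hat{X}) = \tfrac{1}{N}\|\hat{X}-X\|_2^2$. By \eqref{eq::lastestimatorplease}, $\hat{X} \sim \mathcal{N}(X,\sigma^2\|X\|_2^2 I_N)$, so the centered residual $\hat{X}-X$ is distributed as $\mathcal{N}(\mathbf{0}_N,\sigma^2\|X\|_2^2 I_N)$. Writing $\hat{X}-X \overset{d}{=} \sigma\|X\|_2 \cdot Z$ with $Z\sim\mathcal{N}(\mathbf{0}_N,I_N)$, I then have
\begin{equation*}
\|\hat{X}-X\|_2^2 \overset{d}{=} \sigma^2\|X\|_2^2 \cdot \|Z\|_2^2.
\end{equation*}
Since $\|Z\|_2^2 = \sum_{i=1}^N Z_i^2$ is a sum of $N$ independent squared standard normals, it follows by definition that $\|Z\|_2^2 \sim \chi^2_N$. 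Setting $Y := \|Z\|_2^2$ and dividing by $N$ delivers exactly the first claim.

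For the probabilistic statement, I would rescale inside the probability: fixing $\eta \geq 0$,
\begin{equation*}
\mathbb{P}_{\hat{X}}\bigl(\MSE_X(X,\hat{X}) \leq \eta\bigr) = \mathbb{P}\!\left(Y \leq \frac{N\eta}{\sigma^2\|X\|_2^2}\right).
\end{equation*}
The CDF of a $\chi^2_N$ random variable evaluated at $t$ is the standard identity $\mathbb{P}(Y\leq t) = \Gamma_R(N/2,\,t/2)$, where $\Gamma_R(s,x) = \gamma(s,x)/\Gamma(s)$ is the regularized lower incomplete gamma function. Substituting $t = N\eta/(\sigma^2\|X\|_2^2)$ gives the displayed equation \eqref{eq::almostrero}. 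I would cite this chi-squared CDF identity as a standard fact rather than re-derive it.

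There is no real obstacle in this proof: it is a chain of distributional identifications and a single appeal to a textbook formula. The only subtle point worth stating explicitly is that the variance of $\hat{X}$ in \eqref{eq::lastestimatorplease} is isotropic and proportional to $\|X\|_2^2$, which is precisely what allows the residual to factor as a scalar times a standard Gaussian vector and thereby produces a central (rather than non-central or generalized) chi-squared distribution. If the variance were coordinate-dependent, one would instead obtain a weighted sum of squared normals without a clean closed-form CDF, so it is worth flagging that the applicability of the formula relies on the structural result of \Cref{propminimalvarianceestimatornew}.
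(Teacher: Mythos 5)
Your proposal is correct and follows essentially the same route as the paper's proof: both reduce $\MSE_X(X,\hat{X})$ to a scaled sum of $N$ independent squared standard normals via the isotropic variance $\sigma^2\Vert X\Vert_2^2$ from \eqref{eq::lastestimatorplease}, identify this as $\frac{\sigma^2\Vert X\Vert_2^2}{N}\cdot\chi^2_N$, and then invoke the standard expression of the chi-squared CDF through the regularized gamma function. The only cosmetic difference is that you phrase the standardization vectorially while the paper works coordinate-wise.
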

\begin{proof}
    See \Cref{sec::proofs}.
\end{proof}
The tail behavior of $\MSE_X(X, \hat{X})$ presented in \eqref{eq::almostrero} is particularly relevant, as the error threshold $\eta$ may be chosen such that $\MSE_X(X, \hat{X}) \leq \eta$ characterizes informative reconstructions.
\eqref{eq::almostrero} can be further generalized by eliminating the dependence on the specific target point $X$.
Moreover, by employing \Cref{cor::agiaequivdpmech}, we translate the probabilistic bound in \eqref{eq::almostrero} into reconstruction robustness:
\begin{restatable}{corollary}{reromsefirstcase} \label{prop::rerormsefirstcase}
    The DP mechanism $\mathcal{M}$ \eqref{eq::modifieddpmechanism} is $(\eta,\gamma(\eta))$-Reconstruction Robust with respect to the $\MSE$ for
    \begin{equation*}
        \gamma(\eta) = \Gamma_R\left(\frac{N}{2}, \frac{N \eta}{2\sigma^2 \min_{X \in \mathcal{D}\setminus\{\textbf{0}_N\}}\Vert X \Vert_2^2}\right).
    \end{equation*}
\end{restatable}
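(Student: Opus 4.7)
The plan is to reduce the claim to a direct application of the three preceding results by (a) invoking \Cref{cor::agiaequivdpmech} to identify the mechanism $\mathcal{M}$ of \eqref{eq::modifieddpmechanism} with the effective mechanism faced by the optimal AGIA, (b) invoking \Cref{prop::efficientestimatornew} to identify $\hat{X}$ as the best attack available under the from-scratch threat model, and (c) lifting the pointwise (target-specific) probability delivered by \Cref{prop::distrmsefirstcase} to a prior-uniform statement by worst-casing over the support $\mathcal{D}\setminus\{\mathbf{0}_N\}$.

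Concretely, I would first expand the ReRo probability via the tower property,
\begin{equation*}
    \mathbb{P}_{Z\sim\pi,\Theta}[\MSE_X(Z, R(\Theta)) \leq \eta] = \mathbb{E}_{Z \sim \pi}\!\left[\mathbb{P}_{\Theta\mid Z}[\MSE_X(Z, R(\Theta)) \leq \eta]\right],
\end{equation*}
and upper bound the right-hand side by $\sup_{X \in \mathrm{supp}(\pi)} \mathbb{P}_\Theta[\MSE_X(X, R(\Theta)) \leq \eta]$. Setting $R(\Theta) = \hat{X}$ and substituting \eqref{eq::almostrero} turns this pointwise expression into $\Gamma_R\!\left(\tfrac{N}{2},\, \tfrac{N\eta}{2\sigma^2\|X\|_2^2}\right)$. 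Since the regularised lower incomplete gamma function is monotonically increasing in its second argument, and that argument is monotonically decreasing in $\|X\|_2^2$, the supremum is attained at the minimum-norm admissible target, giving the stated $\gamma(\eta)$. The bound survives an outer supremum over all priors $\pi$ with $\mathrm{supp}(\pi) \subseteq \mathcal{D}\setminus\{\mathbf{0}_N\}$.

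The main obstacle is justifying the reduction from an arbitrary reconstruction attack $R$ to the specific MVUE $\hat{X}$, which is needed because \Cref{def::rero} quantifies over every $R$. Here I would lean on \Cref{prop::efficientestimatornew} together with the threat model: $\hat{X}$ is distributed as an isotropic Gaussian centred at the true $X$ with the smallest possible variance, so among unbiased estimators it also maximises the concentration probability $\mathbb{P}(\|\hat{X} - X\|_2^2 \leq N\eta)$ by the monotonicity of the Gaussian tail in the variance. The from-scratch threat model (\Cref{sec::threatmodel}) forbids any informative prior, which closes the door on biased estimators that could otherwise trade bias for variance and beat $\hat{X}$ at target-specific concentration. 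The remaining ingredients --- monotonicity of $\Gamma_R$ in its second argument and the interchange of supremum and expectation --- are routine and should not require more than a line of justification each.
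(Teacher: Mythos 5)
Your proposal is correct and follows essentially the same route as the paper's own proof: apply the chi-squared CDF from \Cref{prop::distrmsefirstcase}, use monotonicity of $\Gamma_R$ in its second argument to worst-case over the minimum-norm target in $\mathcal{D}\setminus\{\mathbf{0}_N\}$, and invoke the optimality of $\hat{X}$ (\Cref{prop::efficientestimatornew}) to dominate every other reconstruction $R$. Your version is merely more explicit about the expectation over the prior and about why the quantification over all attacks reduces to $\hat{X}$ -- a step the paper asserts with the same level of justification.
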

\begin{proof}
    See \Cref{sec::proofs}.
\end{proof}

Even though the $\MSE$ is one of the most common measures for quantifying reconstruction error, it is not robust to scaling. 
This means two reconstructions with the same $\MSE$ value can have vastly different perceptual qualities depending on the data's range.
To overcome this, the Peak Signal-to-Noise Ratio ($\PSNR$), which is based on the $\MSE$, was established as a standard metric in signal processing applications:
\begin{definition}\label{def::psnr}
    The peak signal-to-noise-ratio (PSNR) between a fixed input point $X$ and its estimator $Y_X$ is given as 
    \begin{align*}
        \PSNR_X(X, Y_X) &= 10\cdot \log_{10}\left(\frac{(\max(X)-\min(X))^2}{\MSE(X,Y_X)}\right),
    \end{align*}
    with 
    $\max(X) = \max_{i \in \{1,...,N\}} x_i, \quad \text{and} \quad \min(X)=\min_{i \in \{1,...,N\}} x_i$.
\end{definition}
The $\PSNR$ contextualizes the $\MSE$ by relating the input's range. 
This makes the $\PSNR$ robust to linear scaling, allowing for more meaningful comparisons across different scenarios.
In this regard, the $\PSNR$ has advantages over the $\MSE$ \citep{wang2009mean}.

Unlike the $\MSE$, the $\PSNR$ is a quality score, not an error metric, where higher values correspond to higher fidelity.
Since the $\PSNR$ increases as the $\MSE$ decreases, a reconstruction that minimizes the $\MSE$ (for a non-zero minimum) also maximizes the $\PSNR$.
Thus, by \Cref{propminimalvarianceestimatornew} and \Cref{prop::efficientestimatornew}, $\hat{X}$ \eqref{eq::lastestimatorplease} is also optimal with respect to the $\PSNR$.

Next, we provide probabilistic bounds for the $\PSNR$ between $X$ and $\hat{X}$ to determine its tail behavior.
To do so, we assume $\max_{X \in \mathcal{D}}\max(X)$ and $\min_{X \in \mathcal{D}}\min(X)$ are known quantities.
This information is known whenever the adversary has access to the range of the data, such as when the training data are images. 
Note that this knowledge is only required to measure the reconstruction success, but not to perform the attack.
Given that the $\PSNR$ increases as the similarity between $X$ and $\hat{X}$ grows, we derive a bound on the probability that the $\PSNR$ \textit{exceeds} a certain threshold $\eta$:

\begin{restatable}{proposition}{psnrdistributionfirstcase} \label{propo:psnrdistibutionfirstcase}
For all $X \in \mathcal{D}\setminus\{\textbf{0}_N\}$,
\begin{equation*} \label{eq::reropsnrfirstcase}\begin{split}
    \mathbb{P}_{\hat{X}}(\PSNR_X(&X,\hat{X}) \geq \eta) 
    \leq \Gamma_R\left(\frac{N}{2}, \frac{N\tilde{\eta}(\eta)}{2\sigma^2\min\limits_{X \in \mathcal{D}\setminus\{\textbf{0}_N\}}\Vert X \Vert_2^2}\right),
\end{split}
\end{equation*}
for $\tilde{\eta}(\eta) := 10^{-\frac{\eta}{10}}\left(\max\limits_{X \in \mathcal{D}}\max(X)-\min\limits_{X \in \mathcal{D}}\min(X)\right)^2$ and $\Gamma_R$ the regularized gamma function.
\end{restatable}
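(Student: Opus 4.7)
The plan is to reduce the $\PSNR$ tail probability to an $\MSE$ tail probability and then invoke \Cref{prop::distrmsefirstcase}. Concretely, since $\PSNR_X(X,\hat{X})$ is, by \Cref{def::psnr}, a strictly decreasing function of $\MSE_X(X,\hat{X})$, I would first rewrite the event $\{\PSNR_X(X,\hat{X})\geq\eta\}$ as an $\MSE$ event. Solving $10\log_{10}\bigl((\max(X)-\min(X))^2/\MSE_X(X,\hat{X})\bigr)\geq\eta$ for the $\MSE$ gives the equivalent event
\begin{equation*}
    \MSE_X(X,\hat{X})\;\leq\;10^{-\eta/10}(\max(X)-\min(X))^2.
\end{equation*}

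Next, I would replace the sample-specific range $\max(X)-\min(X)$ by the worst-case range over $\mathcal{D}$. Because $\max(X)\leq\max_{X'\in\mathcal{D}}\max(X')$ and $\min(X)\geq\min_{X'\in\mathcal{D}}\min(X')$, the right-hand side above is bounded from above by $\tilde{\eta}(\eta)$. Hence
\begin{equation*}
    \{\PSNR_X(X,\hat{X})\geq\eta\}\;\subseteq\;\{\MSE_X(X,\hat{X})\leq\tilde{\eta}(\eta)\},
\end{equation*}
so monotonicity of probability yields $\mathbb{P}_{\hat{X}}(\PSNR_X(X,\hat{X})\geq\eta)\leq\mathbb{P}_{\hat{X}}(\MSE_X(X,\hat{X})\leq\tilde{\eta}(\eta))$.

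Then I would invoke \Cref{prop::distrmsefirstcase} directly with the threshold $\tilde{\eta}(\eta)$, obtaining
\begin{equation*}
    \mathbb{P}_{\hat{X}}(\MSE_X(X,\hat{X})\leq\tilde{\eta}(\eta))\;=\;\Gamma_R\!\left(\tfrac{N}{2},\,\tfrac{N\tilde{\eta}(\eta)}{2\sigma^2\Vert X\Vert_2^2}\right).
\end{equation*}
Finally, since $\Gamma_R(N/2,\cdot)$ is strictly increasing in its second argument, replacing $\Vert X\Vert_2^2$ in the denominator by $\min_{X'\in\mathcal{D}\setminus\{\mathbf{0}_N\}}\Vert X'\Vert_2^2$ (which is no larger) only inflates the argument and therefore inflates the value of $\Gamma_R$, yielding the claimed uniform upper bound valid for every $X\in\mathcal{D}\setminus\{\mathbf{0}_N\}$.

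The main obstacle is really just bookkeeping around the two monotonicity steps: one must be careful that each inclusion/inequality goes in the correct direction so that the chain $\mathbb{P}(\PSNR\geq\eta)\leq\cdots\leq\Gamma_R(\cdot)$ remains an upper bound. In particular, the $\MSE$ threshold is enlarged (by enlarging the range), while simultaneously the norm in the denominator is shrunk (by taking a minimum over $\mathcal{D}$); both moves individually enlarge the second argument of $\Gamma_R$, which is the direction we need for an upper bound. No further tools beyond \Cref{prop::distrmsefirstcase} and elementary monotonicity are required.
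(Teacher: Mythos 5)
Your proposal is correct and follows essentially the same route as the paper's proof: convert the $\PSNR$ tail event into an $\MSE$ tail event, apply \Cref{prop::distrmsefirstcase}, and use monotonicity of $\Gamma_R$ in its second argument to replace the sample-specific range by $\tilde{\eta}(\eta)$ and $\Vert X\Vert_2^2$ by its minimum over $\mathcal{D}\setminus\{\mathbf{0}_N\}$. The only cosmetic difference is the order of the two monotonicity steps (you enlarge the $\MSE$ threshold via an event inclusion before invoking the chi-squared CDF, whereas the paper applies the CDF first and then enlarges the argument), which changes nothing of substance.
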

\begin{proof}
    See \Cref{sec::proofs}.
\end{proof}
 
To reformulate the result in \Cref{propo:psnrdistibutionfirstcase} into Reconstruction Robustness, we utilize the negative $\PSNR$ ($-\PSNR$) as a reconstruction error function:

\begin{restatable}{corollary}{psnrrerofirstcase} \label{propo:psnrrerofirstcase}
The DP mechanism $\mathcal{M}$ \eqref{eq::modifieddpmechanism} is $\left(-\eta,\tilde{\gamma}(\tilde{\eta}(\eta))\right)$-Reconstruction Robust with respect to the negative $\PSNR$ ($-\PSNR$) for any analytic reconstruction and 
\begin{equation*}
    \tilde{\gamma}(\tilde{\eta}(\eta)) =\Gamma_R\left(\frac{N}{2}, \frac{N\tilde{\eta}(\eta)}{2\sigma^2\min_{X \in \mathcal{D}\setminus\{\textbf{0}_N\}}\Vert X \Vert_2^2}\right),
\end{equation*}
for $\tilde{\eta}(\eta)$ as defined in \Cref{propo:psnrdistibutionfirstcase}.
\end{restatable}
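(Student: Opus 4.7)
The plan is to show that the probabilistic bound in Proposition \ref{propo:psnrdistibutionfirstcase} translates directly into Reconstruction Robustness once the error function is chosen to be the negative $\PSNR$. The argument is essentially a bookkeeping exercise that repackages previously established results into the $(\eta,\gamma)$-ReRo template of Definition \ref{def::rero}, combined with an optimality transfer that promotes a statement about the specific estimator $\hat{X}$ into a statement about \emph{any} analytic reconstruction attack admissible in the threat model.

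First I would unfold Definition \ref{def::rero} with $l(Z, R(\Theta)) = -\PSNR_Z(Z, R(\Theta))$ and error threshold $-\eta$. The event $\{l(Z, R(\Theta)) \leq -\eta\}$ rewrites as $\{\PSNR_Z(Z, R(\Theta)) \geq \eta\}$, so proving $(-\eta, \tilde{\gamma}(\tilde{\eta}(\eta)))$-ReRo reduces to upper bounding $\mathbb{P}_{Z\sim\pi,\Theta\sim\mathcal{M}(D_{-}\cup\{Z\})}[\PSNR_Z(Z, R(\Theta)) \geq \eta]$ by $\tilde{\gamma}(\tilde{\eta}(\eta))$ uniformly over all reconstruction attacks $R$.

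Second I would invoke Corollary \ref{cor::agiaequivdpmech} to identify the output of the mechanism $\mathcal{M}$ in \eqref{eq::modifieddpmechanism} as the information on which any analytic reconstruction $R$ must operate within the assumed threat model. Corollary \ref{prop::efficientestimatornew} then tells us that $\hat{X}$ is the MVUE for the target, so among analytic reconstructions it achieves the tightest tail concentration around the true $X$; since $\PSNR$ is a monotonically decreasing function of $\MSE$, this means that for any competing $R(\Theta)$ we have $\mathbb{P}[\PSNR_Z(Z,R(\Theta)) \geq \eta] \leq \mathbb{P}[\PSNR_Z(Z,\hat{X}) \geq \eta]$. Applying Proposition \ref{propo:psnrdistibutionfirstcase} to the right-hand side yields the regularized-gamma bound.

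Third I would eliminate the dependence on the particular realization of $Z$ (hence on the prior $\pi$) by noting that Proposition \ref{propo:psnrdistibutionfirstcase} is already stated in worst-case form: the denominator $\sigma^2 \min_{X \in \mathcal{D}\setminus\{\textbf{0}_N\}}\Vert X\Vert_2^2$ dominates $\sigma^2 \Vert Z\Vert_2^2$ almost surely under any $\pi$ supported on $\mathcal{D}$, so taking an expectation over $Z$ preserves the bound. This delivers exactly $\tilde{\gamma}(\tilde{\eta}(\eta))$, which matches the claim. The main obstacle I expect is step two, namely rigorously promoting the MVUE property (a statement about expected error) into stochastic dominance for the $\PSNR$ tail; the reason it goes through here is that $\hat{X}$ is Gaussian and hence its error distribution is fully determined by mean and variance, so minimizing the variance is equivalent to minimizing the tail probability $\mathbb{P}[\MSE \leq \tilde{\eta}]$, and this equivalence must be spelled out for the transfer to be valid against non-Gaussian competitors within the threat model.
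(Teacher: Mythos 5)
Your proposal follows essentially the same route as the paper's own proof: rewrite the event $\{-\PSNR \leq -\eta\}$ as $\{\PSNR \geq \eta\}$, transfer the bound from the optimal estimator $\hat{X}$ to an arbitrary analytic reconstruction via its optimality, apply \Cref{propo:psnrdistibutionfirstcase}, and invoke \Cref{def::rero}. Your explicit flagging of the subtle step — that the MVUE property concerns expected error and must be upgraded to tail-probability dominance over arbitrary (possibly non-Gaussian, possibly biased) competitors — is in fact more careful than the paper, which simply asserts ``the optimality of the reconstruction $\hat{X}$ with respect to the $\PSNR$'' without spelling out that transfer.
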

\begin{proof}
    See \Cref{sec::proofs}.
\end{proof}

The analysis in this section marks a critical departure from prior works focused on perfect reconstruction. 
Instead of a binary success/fail outcome, our bounds allow a practitioner to compute the probability that the reconstruction error falls below any chosen threshold $\eta$. 
This is crucial from a practical privacy perspective, where an attack's success is not determined by perfect fidelity, but by whether the reconstruction is \say{sufficiently informative} to constitute a privacy breach. 
Our bounds provide the theoretical guarantee needed to quantify this nuanced risk.
An in-depth comparison of our results with the works by \citet{hayes2023bounding} and \citet{kaissis2023optimal} can be found in \Cref{sec::matching-basedbounds} and visualized in \Cref{fig:comparison}.

\subsection{Information Accumulation Across Multiple Attacks}\label{sec::multistep}
Based on a single attack, we showed in \Cref{sec:optimalityattack} that the optimal reconstruction an adversary can obtain (in terms of $\MSE$) is $\hat{X}$ \eqref{eq::lastestimatorplease}. 
We now explore how $\hat{X}$ may change under repeated adversarial attacks, where the adversary observes intermediate gradients across \textit{multiple} training iterations.

When the model is trained using DP-SGD with a fixed batch size of one, the \textit{subsampling} step of the algorithm randomly selects a single input for each iteration. 
As a result, the adversary \textit{cannot} control whether the same target record $X$ is selected across multiple attacks -- or, equivalently, training iterations.
Specifically, if the algorithm selects one input point uniformly at random from a training dataset $D$, then the probability that a particular target record is selected exactly $k$ times is given by $\binom{\hat{K}}{k} \cdot \frac{1}{|D|^k}$ when the attack is performed $\hat{K}$ times.
Thus, \textit{with probability} $\binom{\hat{K}}{k} \cdot \frac{1}{|D|^k}$, the variance of the reconstruction coordinates degrades as follows:
\begin{restatable}{proposition}{convergenceundermultipletimesteps} 
\label{convergenceundermultipletimesteps}
 Assuming the adversary can match the $k$ reconstructions $\hat{X}_1,..., \hat{X}_k$ to the same data sample $X$, they can aggregate them by averaging. 
 Let $\hat{X}_{\emph{avg}} = \frac{1}{k} \sum_{j=1}^k \hat{X}_j$ denote the averaged reconstructed vector.
Then the following holds for the $i$-th entry of $\hat{X}_{\emph{avg}}$:  
\begin{align} \label{eq::variance_average_noise}
    \ev[\hat{x_i}_{\emph{avg}}]=x_i \quad \text{and} \quad  \var\left[\hat{x_i}_{\emph{avg}}\right]&=\frac{\sigma^2\Vert X\Vert_2^2}{k}.
\end{align} 
\end{restatable}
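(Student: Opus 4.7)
The plan is to exploit the distributional characterization already established in \eqref{eq::lastestimatorplease}: under the optimal attack strategy from \Cref{propminimalvarianceestimatornew}, each per-iteration reconstruction satisfies $\hat{X}_j \sim \mathcal{N}(X, \sigma^2 \Vert X \Vert_2^2 I_N)$, and these random vectors are mutually independent across iterations because DP-SGD injects a fresh, independent Gaussian noise sample $\xi_j$ at every step. With this in hand, the proposition reduces to the textbook computation of the mean and variance of a sample average of i.i.d.\ Gaussian coordinates.

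First, I would isolate the coordinate-wise marginal: by the standard fact that marginals of a multivariate Gaussian with diagonal covariance are independent univariate Gaussians, the $i$-th entry of $\hat{X}_j$ satisfies $\hat{x}_{i,j} \sim \mathcal{N}(x_i, \sigma^2 \Vert X \Vert_2^2)$, and $\{\hat{x}_{i,j}\}_{j=1}^{k}$ is an i.i.d.\ family. Second, unbiasedness follows immediately from linearity of expectation: $\ev[\hat{x}_{i,\text{avg}}] = \frac{1}{k}\sum_{j=1}^{k} \ev[\hat{x}_{i,j}] = x_i$. Third, the variance computation uses independence of the per-iteration noise: $\var[\hat{x}_{i,\text{avg}}] = \frac{1}{k^2}\sum_{j=1}^{k}\var[\hat{x}_{i,j}] = \frac{1}{k^2}\cdot k \cdot \sigma^2\Vert X\Vert_2^2 = \frac{\sigma^2\Vert X\Vert_2^2}{k}$, which is exactly \eqref{eq::variance_average_noise}.

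The only nontrivial point — and the main obstacle to rigor — is justifying that the per-iteration reconstructions really are identically distributed and independent. The scaling factor $\beta_C(X)$ and the internal state of the model evolve across iterations, which in principle could make the distribution of $\hat{X}_j$ iteration-dependent. However, under the architectural choices of \Cref{propminimalvarianceestimatornew} with $M$ satisfying \eqref{eq::choiceM}, the variance of the optimal estimator simplifies to $\sigma^2 \Vert X\Vert_2^2 I_N$, a quantity that depends only on $\sigma$ and $\Vert X\Vert_2$ and is therefore invariant across training iterations. Independence across iterations then follows from the independence of the Gaussian noise draws in DP-SGD. I would state this invariance explicitly as the bridge between the single-iteration result \eqref{eq::lastestimatorplease} and the multi-iteration aggregation, after which the conclusion is an immediate consequence of the i.i.d.\ sample-mean identities.
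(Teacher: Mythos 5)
Your proposal is correct and follows essentially the same route as the paper's proof: invoke the per-iteration distribution $\hat{X}_j \sim \mathcal{N}(X, \sigma^2\Vert X\Vert_2^2 I_N)$ from \eqref{eq::lastestimatorplease}, then apply linearity of expectation and additivity of variance for independent draws. Your extra remark justifying why the per-iteration distributions are identical (the variance depends only on $\sigma$ and $\Vert X\Vert_2$, not the iteration) is a point the paper leaves implicit, but it does not change the argument.
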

\begin{proof}
    See \Cref{sec::proofs}.
\end{proof}
\Cref{convergenceundermultipletimesteps} was derived under the \textit{strong} assumption that the adversary is able to distinguish that their reconstructions $\hat{X}_1,..., \hat{X}_k$ correspond to the same target point $X$.
To evaluate the validity of such an assumption, it is necessary to carefully devise an adversarial strategy for identifying when multiple reconstructions correspond to the same target record.
We leave such an analysis for future work.

\subsection{Degradation of Bounds with Auxiliary Information}\label{sec::auxinfo}
A central question in assessing privacy risk is the value of auxiliary information. 
While it is clear that an adversary with more prior knowledge is more powerful, quantifying this advantage is non-trivial. 
Here, we leverage our framework to formally quantify the advantage an adversary gains from possessing a candidate set of potential data points, $\mathcal{D}$, as assumed in prior identification-based threat models \citep{hayes2023bounding,kaissis2023bounding,kaissis2023optimal}.

We measure this advantage by answering the following question: for a fixed probability of success, $\gamma$, how much does access to a candidate set reduce the achievable reconstruction error, $\eta$? 
To do this, we first need to express our bound for $\eta$ as a function of $\gamma$. 
\Cref{prop::rerormsefirstcase} can be inverted to provide exactly this relationship:

\begin{restatable}{corollary}{etafromgamma}
% \begin{corollary}[MSE Threshold as a Function of Probability Bound]
\label{cor:eta_from_gamma}
For a fixed probability bound $\gamma\in(\kappa,1)$ and noise variance $\sigma^2>0$, the corresponding $\MSE$ threshold $\eta$ satisfies:
\begin{equation}
    \eta(\gamma) \geq \frac{2\sigma^2}{N} \left( \min_{X \in \mathcal{D} \setminus {\{\mathbf{0}_N}\}} \Vert X\Vert_2^2 \right) \Gamma_R^{-1}\left(\frac{N}{2}, \gamma\right)
\end{equation}
where $\Gamma_R^{-1}(a,\cdot)$ denotes the inverse of the regularized gamma function with respect to its second argument.
\end{restatable}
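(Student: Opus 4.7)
The plan is to obtain the stated lower bound on $\eta(\gamma)$ by directly inverting the closed-form expression for $\gamma(\eta)$ given in \Cref{prop::rerormsefirstcase} via the monotonicity of the regularized lower incomplete gamma function. Concretely, write $m := \min_{X \in \mathcal{D}\setminus\{\mathbf{0}_N\}}\|X\|_2^2$, so that \Cref{prop::rerormsefirstcase} reads
\begin{equation*}
    \gamma(\eta) \;=\; \Gamma_R\!\left(\tfrac{N}{2},\, \tfrac{N\eta}{2\sigma^2 m}\right).
\end{equation*}

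The first step is to record the analytic properties of $\Gamma_R$ needed for inversion: for any fixed shape parameter $a > 0$, the map $x \mapsto \Gamma_R(a, x)$ is continuous and strictly increasing from $[0,\infty)$ onto $[0,1)$. Hence its inverse $\Gamma_R^{-1}(a,\cdot)$ is well-defined, continuous and strictly increasing on $[0,1)$; the hypothesis $\gamma \in (\kappa, 1)$ (with $\kappa$ the floor used throughout the paper for nontriviality) places the argument safely inside the domain of $\Gamma_R^{-1}(N/2,\cdot)$.

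The second step is the algebraic inversion itself: applying $\Gamma_R^{-1}(N/2,\cdot)$ to both sides of $\gamma = \Gamma_R(N/2, N\eta/(2\sigma^2 m))$ gives $\Gamma_R^{-1}(N/2, \gamma) = N\eta/(2\sigma^2 m)$, from which $\eta = (2\sigma^2 m / N)\,\Gamma_R^{-1}(N/2, \gamma)$ follows by a single rearrangement.

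The main subtlety — and the only thing beyond routine symbol pushing — is justifying why the resulting identity appears in the statement as an inequality ($\geq$) rather than an equality. The point is that \Cref{prop::rerormsefirstcase} is itself an \emph{upper} bound on the adversary's success probability obtained by inserting the worst-case (minimum-norm) target $m$; thus, for any $\eta' < (2\sigma^2 m/N)\,\Gamma_R^{-1}(N/2,\gamma)$, the ReRo probability would already be strictly below $\gamma$, meaning no adversary can guarantee success at probability $\gamma$ with a tighter error threshold. I would close by invoking the strict monotonicity of $\Gamma_R^{-1}(N/2,\cdot)$ once more to conclude that any $\eta(\gamma)$ consistent with \Cref{prop::rerormsefirstcase} at the prescribed probability level must satisfy the claimed inequality, completing the proof.
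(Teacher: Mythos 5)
Your proposal is correct and takes essentially the same route as the paper, which simply states that the corollary \say{follows from \Cref{prop::rerormsefirstcase} by algebraically solving for $\eta$}; your additional care about the monotonicity of $\Gamma_R(N/2,\cdot)$ and the reason the result is stated as an inequality (because \Cref{prop::rerormsefirstcase} is itself a worst-case upper bound over targets) is a faithful elaboration of that one-line argument rather than a different approach.
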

\begin{proof}
     Follows from \Cref{prop::rerormsefirstcase} by algebraically solving for $\eta$.
\end{proof}

This corollary provides the tool to perform a direct comparison to identification-based threat models \citep{hayes2023bounding,kaissis2023bounding,kaissis2023optimal}. 
The analysis proceeds as follows:
First, consider the bounds from prior work, $\gamma_\text{prior}(\sigma,\kappa)$, which give the probability of a \textit{perfect} reconstruction ($\eta=0$) for an adversary with a candidate set.
Next, we can insert this same success probability, $\gamma_\text{prior}$, into our inverted formula from \Cref{cor:eta_from_gamma}.
The result, $\eta(\gamma_\text{prior})$, tells us the reconstruction error an adversary \textit{without} a candidate set has to tolerate to achieve the exact same probability of success.

Therefore, this value $\eta(\gamma_\text{prior})$ precisely quantifies the benefit of having a candidate set: it is the \say{error discount} that auxiliary information provides. 
It represents the reduction in reconstruction error -- from some non-zero value $\eta > 0$ all the way down to $\eta=0$ -- that an adversary gains by moving from a from-scratch attack to an identification-based attack (see \Cref{fig:comparison}). 
We call the interval $\left[0,\eta(\gamma_\text{prior})\right]$ \say{risk corridor}.

\section{Calibrated protection enables favorable privacy-utility trade-offs}\label{sec::context}
\begin{figure}[t]
    \centering
    \includegraphics[width=0.99\linewidth]{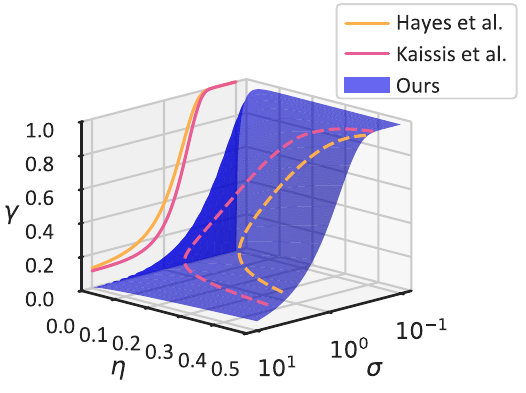}
    \caption{
    A comparison of the risk surfaces for from-scratch reconstruction (our bound) versus identification-based reconstruction (prior work). The axes represent the noise multiplier ($\sigma$), the reconstruction error threshold ($\eta$), and the attack success probability ($\gamma$).
    The bounds from \citet{hayes2023bounding} and \citet{kaissis2023optimal} are lines restricted to the $\eta=0$ plane, as they only model the risk of perfect reconstruction (identification).
    In contrast, our bound (blue surface) characterizes the risk for any error threshold $\eta > 0$, reflecting the nature of from-scratch approximation.
    The dashed lines show the projection of the prior bounds onto our risk surface. This projection visualizes the \protect\say{error discount} an adversary gains from possessing a candidate set: for a fixed success probability $\gamma$, the corresponding point on our surface reveals the reconstruction error $\eta$ a from-scratch attacker would have to tolerate.
    (Parameters: $\MSE$ metric, $N=1$, $\Delta=1$, $\kappa=1/10$.)
    }
    \label{fig:comparison}
\end{figure}

This section contextualizes our theoretical results by comparing them against the matching-based bounds proposed by \citet{hayes2023bounding}, \citet{kaissis2023bounding}, and \citet{kaissis2023optimal}. 
We analyse the fundamental differences in threat models and their interpretations by visualizing the effect of a candidate set on reconstruction robustness (analogous to \Cref{sec::auxinfo}). 
Furthermore, we empirically evaluate the tightness of our bounds under practical deviations from the optimal attack strategy.
The program code, which was used to create all the Figures, can be found at \url{https://github.com/TUM-AIMED/FromMeanToExtreme}.

\subsection{Comparison to matching-based bounds.} \label{sec::matching-basedbounds}
A visual comparison of our from-scratch reconstruction bound with the matching-based bounds of \citet{hayes2023bounding} and \citet{kaissis2023optimal} is presented in \Cref{fig:comparison}, which plots the relationship between the noise multiplier ($\sigma$), reconstruction error ($\eta$), and attack success probability ($\gamma$).
This comparison reveals several fundamental differences rooted in the underlying threat models.

First, the two classes of bounds quantify different adversarial goals: \textit{identification} versus \textit{approximation}. 
Matching-based bounds are defined only for the case of perfect reconstruction ($\eta=0$), evaluating the probability that an adversary can correctly identify the true data point from a known candidate set. 
In contrast, our bound characterizes the probability of achieving an imperfect reconstruction with an error less than or equal to $\eta$. 
A direct consequence is that for our bound, the probability of a perfect reconstruction ($\eta=0$) is always zero for any injected noise ($\sigma>0$). 
This is because the continuous Gaussian noise added to the gradient would have to be exactly zero for a perfect reconstruction.

Second, projecting the matching-based bounds onto the surface of our bound, as described in \Cref{sec::auxinfo}, allows us to visualize the effect of auxiliary knowledge (i.e., a candidate set) on the achievable reconstruction error. 
For moderate values of attack success probability ($\gamma$), the required reconstruction error ($\eta$) is similar between the models. 
However, at the extremes, the difference diverges. 
As $\gamma \to 1$, our bound converges more slowly, indicating that achieving near-certain reconstruction requires tolerating a higher error $\eta$ in a from-scratch attack compared to a matching attack. 
The most telling divergence occurs as the random guess baseline is approached. 
Matching-based bounds assume the target is within a known candidate set, so an adversary can achieve a baseline success rate of $\gamma=\kappa$ (where $\kappa$ is the inverse of the candidate set size) simply by guessing randomly, even with no information from the model ($\sigma \to \infty$). 
In our from-scratch model, an adversary without a candidate set has no such advantage; a random guess in a high-dimensional space results in a negligible success probability. 
Consequently, as $\sigma \to \infty$, the success probability for our bound approaches zero ($\gamma \to 0$). 
This fundamental difference in baseline assumptions causes the required reconstruction error $\eta$ to diverge substantially as $\gamma$ approaches the random-guessing floor $\kappa$ of the matching-based models.

\subsection{Testing our bounds in real-world situations}\label{sec::tightness}
\begin{figure}[t]
    \centering
    \includegraphics[width=\linewidth]{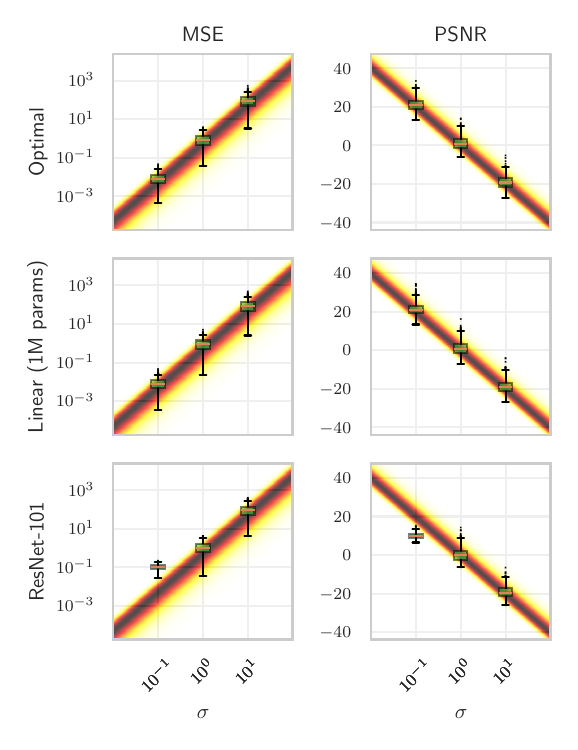}
    \caption{
    Empirical reconstruction quality vs. our theoretical probabilistic bounds. 
    Each subplot shows the reconstruction error (y-axis) as a function of the noise multiplier $\sigma$ (x-axis).
    The background color map illustrates the Probability Density Function (PDF) predicted by our theory (darker is higher probability), while the overlaid box plots show the distribution of empirical results from 500 samples.
    The columns compare two metrics (MSE and PSNR). 
    The rows represent increasingly complex and realistic attack scenarios: the top row shows the ideal case matching our theoretical optimal attack; the middle row adds a non-contributory 1M-parameter linear layer; and the bottom row simulates a practical attack embedded within a full ResNet-101 architecture ($\approx$45M parameters).
    The close alignment between the empirical distributions and the theoretical PDFs across all scenarios validates the tightness of our bounds, even in realistic, high-parameter settings.
    (Fixed parameters: $C=1$, $M=1$.)
    }
    \label{fig:tightness}
\end{figure}
Our theoretical bounds are derived to reflect practical attack scenarios.
To this end, we defined an optimal attack strategy under the AGIA threat model, which involves replacing the network with a single linear layer.
However, existing attacks, such as those by \citet{boenisch2023curious,feng2024privacy,fowl2021robbing}, embed the reconstruction components within larger networks.
As established in \Cref{sec::results}, additional layers that contribute to the total gradient norm without aiding in reconstruction increase the noise required under DP, which can degrade the quality of the reconstruction.

To quantify this effect and evaluate the tightness of our bounds, we conduct an empirical analysis. 
We evaluate reconstruction quality across three configurations: 1) the optimal attack strategy; 2) the optimal attack augmented with a non-contributory linear layer of one million parameters; and 3) the attack architecture embedded within a ResNet-101, adding approximately 45 million non-contributory parameters.
\Cref{fig:tightness} plots the results, and implementation details are provided in \Cref{sec::detailtightness}.

The empirical results show a close correspondence with the distributions predicted by our theoretical bounds.
A deviation is observed for small values of $\sigma$ in the ResNet-101 setting, where the empirical reconstruction quality is lower than predicted by the bounds.
In the other settings, the empirical results align with the theoretical predictions. 
This indicates that the derived bounds provide a useful estimate of reconstruction risk, even when the attack is situated within larger network architectures.

\section{Discussion \& Conclusion}\label{sec::discussion}
The ultimate value of theoretical bounds against reconstruction attacks lies in their practical utility. 
The central question is whether such bounds can guide practitioners in the difficult task of reconciling stringent privacy requirements with the need for high-performing machine learning models.
For these guarantees to be meaningful, they must accurately describe reconstruction robustness in realistic, not just theoretical, settings.
This work contributes to this goal by providing the first formal bounds calibrated to the mechanics of real-world Analytic Gradient Inversion Attacks (AGIAs) -- those demonstrated by \citet{boenisch2023curious,fowl2021robbing,feng2024privacy} -- rather than relying on hypothetical, analytically convenient threat models.

The primary contribution of our analysis is the empowerment of the practitioner.
While prior worst-case bounds provide important theoretical guarantees, they become overly pessimistic when misapplied to more realistic threats like AGIAs, forcing the addition of excessive noise that degrades model accuracy (see \Cref{fig:sigmaeta}).
Our results offer a more fine-grained tool by introducing the reconstruction error threshold, $\eta$, as an explicit parameter in the risk analysis.
By this, a practitioner can now quantitatively estimate the probability of an adversary's reconstruction error falling below a specific success threshold $\eta$ for a given dataset and a chosen set of DP parameters ($\sigma, C$).
This transforms the abstract concept of privacy into a concrete, actionable risk assessment.
Equipped with this capability, practitioners can make a more informed decision, selecting a noise multiplier $\sigma$ that provides a desired level of protection against reconstruction while minimizing the accompanying impact on model utility.

Consider a medical imaging application where a reconstruction $\MSE$ below $\eta\leq0.5$ was identified as a critical threshold where errors below could potentially reveal patient-identifiable features. 
Using our bounds, a hospital's data science team could determine the minimum $\sigma$ required to ensure that the probability of such a reconstruction remains below, for instance, $\gamma\leq10^{-3}$. 
This allows them to either proceed with confidence or conclude that the utility cost for the required level of privacy is too high, prompting a re-evaluation of the project's feasibility.

A key reason our bounds offer a different perspective is our threat model's assumption of a \say{from-scratch} reconstruction, where the adversary possesses no prior candidate set. 
This stands in contrast to prior bounds based on matching a privatized output to a known set of candidates \citep{hayes2023bounding, kaissis2023bounding,kaissis2023optimal}. 
While the matching model represents a valid worst-case scenario, our from-scratch model more accurately reflects the capabilities of many demonstrated AGIAs, specifically those that operate without strong, learned priors about the data distribution.
It is important to acknowledge, however, that reality often lies between these two extremes: an adversary may not possess a full candidate set, but may have some form of auxiliary information.
% Our framework allows for quantifying the effect of such intermediate knowledge.
% As demonstrated in \Cref{sec::auxinfo} and \Cref{sec::matching-basedbounds}, by projecting existing matching-based bounds onto the surface of our from-scratch reconstruction bound, we can estimate the degradation in reconstruction error ($\eta$) that an adversary could achieve with access to a candidate set.
% This provides a principled way to interpolate between the two threat models and assess risk across a spectrum of adversary capabilities.
Our framework provides a crucial tool for reasoning about this uncertainty. 
By establishing a tight, worst-case bound for the from-scratch threat model (i.e., an adversary with no prior candidate set), we provide practitioners with a formal guarantee for a realistic and commonly demonstrated class of attacks. 
This allows the overall risk to be bounded from two sides: the identification-based bounds provide the worst-case risk assuming a powerful adversary with a full candidate set, while our analysis provides the worst-case risk assuming a no-prior-knowledge adversary. 
A data owner can now assess this \say{risk corridor} -- the space between the two formal guarantees -- to make a more nuanced judgment about the practical privacy risk, rather than relying on a single, potentially misaligned worst-case number.

% An interesting and seemingly counter-intuitive phenomenon arises from this comparison, as visualized in \Cref{fig:sigmaeta}.
Analyzing the boundaries of this risk corridor reveals an interesting and seemingly counter-intuitive phenomenon, as visualized in \Cref{fig:sigmaeta}. 
While our bound offers a clear advantage for small, non-zero error thresholds ($\eta$), there exists a crossover point beyond which our bound requires \textit{more} noise to guarantee a given risk level $\gamma$ for $\gamma>\kappa$ than the identification-based bounds.
This occurs because the two bounds are fundamentally protecting against different adversarial tasks. 
The identification-based adversary's task is constrained: they must find a perfect match within a finite candidate set.
In contrast, our from-scratch adversary faces an unconstrained task: they can succeed if their reconstruction is \say{close enough} to the target in an infinite space of possibilities. 
For a very permissive error threshold $\eta$, the volume of \say{successful} reconstructions becomes vast, making it statistically harder to guarantee that the adversary's result will fall outside this large volume. 
Therefore, preventing any reconstruction with an error less than a large $\eta$ can be a harder problem to solve -- and thus require more noise -- than preventing a perfect match from a small, pre-defined set. 
This highlights that the choice of an appropriate threat model is not just about pessimism, but about accurately defining the specific adversarial capability one aims to defend against.

Our theoretical analysis is centered on Mean Squared Error ($\MSE$) and Peak Signal-to-Noise Ratio ($\PSNR$), which are standard metrics for reconstruction quality and are directly amenable to probabilistic bounding within the $(\eta, \gamma)$-ReRo framework. 
However, for certain diagnostic purposes, other metrics can provide complementary insights. 
Notably, the Normalized Cross-Correlation ($\NCC$) is particularly useful for detecting cases where the adversary correctly reconstructs the structure of the data but fails to estimate the correct scaling factor, a scenario where $\MSE$ might be misleadingly high. 
However, the $\NCC$ has its own limitations: its value is sensitive to data dimensionality, making cross-scenario comparisons difficult, and as a single correlation statistic, it does not naturally yield the kind of probabilistic error bounds central to our framework. 
Acknowledging its diagnostic utility, we include a formal analysis of the $\NCC$ and its bounds in \Cref{sec::ncc}.

Our analysis is intentionally focused on the batch size one ($B=1$) scenario, as this constitutes a necessary and foundational step in understanding reconstruction risk. 
This choice is motivated by its role as the privacy worst-case for this attack vector. 
The averaging of gradients over a larger batch naturally obscures the contribution of any single data point, making direct reconstruction significantly more difficult. 
By providing a formal bound for the $B=1$ case, we establish a robust guarantee against the most potent and direct form of this threat.
This focus is further justified by the mechanics of demonstrated AGIAs. 
For example, the attack by \citet{feng2024privacy}, which uses \say{privacy backdoors}, is explicitly designed to create \say{data traps} that activate on a \textit{single} input within a batch and then become inactive. 
Even in their more complex model, the goal -- and the fundamental threat -- is the isolation and reconstruction of a \textit{single} sample. 
Similarly, other sophisticated techniques for distilling \textit{single} samples from larger batches have shown high failure rates in practice \citep{boenisch2023curious,fowl2021robbing}.
Therefore, analyzing the $B=1$ case is not a simplification for convenience, but rather a direct analysis of the core adversarial goal. 
Analogously, our assumption that the adversary can accurately estimate the scaling factors also represents a worst-case analysis. Any error in estimating these scales would propagate and increase the error of the final reconstruction, making the attack less effective. Therefore, to derive a robust upper bound on the attacker's success, we must analyze the scenario where their estimation is perfect.
While extending these formalisms to model the statistical effects of larger batches or imperfect scale estimation are important areas for future research, any such analysis will necessarily build upon the foundational understanding of the single-sample, perfect-information worst-case that our work provides.
% While extending these formalisms to model the statistical effects of multi-sample batches is an important area for future research, any such analysis will necessarily build upon the foundational understanding of the single-sample worst-case that our work provides.

A natural desire arising from this line of work is for a prescriptive \say{cookbook} that provides concrete privacy parameters for specific applications. 
However, a key implication of our analysis is that such a one-size-fits-all approach is fundamentally misguided. 
The very purpose of our framework is to move the field away from a single, universal number and toward a more nuanced, context-dependent form of reasoning.
Our bounds do not provide a single answer, but rather a principled methodology for a practitioner to find their own answer. 
The process involves: (1) defining what level of reconstruction error ($\eta$) is considered harmful for a specific use case -- a decision that requires domain expertise; (2) specifying an acceptable risk tolerance ($\gamma$); and (3) using our derived bounds to calculate the necessary noise multiplier ($\sigma$) to enforce that specific guarantee. 
Therefore, our contribution is not a recipe book, but rather the provision of mathematical tools and a conceptual framework required for practitioners to engage in this essential, context-aware risk assessment themselves.

Despite these advances, our work is not without limitations, and it highlights several important avenues for future research.
While our framework provides probabilistic guarantees for common error metrics like $\MSE$ and $\PSNR$, it does not solve the fundamental problem of defining what constitutes a \say{successful} reconstruction.
The choice of an acceptable error threshold is inherently context-dependent; a reconstruction that is unintelligible for one application might be sufficiently informative to constitute a privacy breach in another.
Therefore, the application of our bounds requires practitioners to define this threshold based on the specific data and circumstances of their use case.

A crucial question concerns the scope of our threat model. 
While our work provides the first tight bounds for a \say{from-scratch} adversary, it is important to clarify that our analysis is calibrated specifically to the mechanics of Analytic Gradient Inversion Attacks (AGIAs) demonstrated in prior work \citep{boenisch2023curious,fowl2021robbing,feng2024privacy}. 
Our threat model intentionally assumes an adversary with minimal auxiliary information beyond the data's basic structure, which reflects the capabilities of these published attacks.
However, this is not the strongest conceivable from-scratch adversary. 
More powerful adversaries might leverage sophisticated, learned priors about the data distribution. 
For instance, recent work by \citet{schwethelm2024visual} has shown that incorporating generative priors from diffusion models can significantly enhance the success of reconstruction attacks. 
Our current bounds do not account for such powerful auxiliary knowledge and would not hold against this stronger threat. 
Therefore, while our framework provides a robust defense against a demonstrated class of real-world attacks, extending this analysis to quantify the risk posed by adversaries equipped with powerful learned priors represents a critical and challenging frontier for future research.

Our analysis of information accumulation over multiple attack iterations rests on a strong assumption: that the adversary can perfectly match multiple, noisy reconstructions to the same underlying data record.
While this provides a useful upper bound on the risk of repeated attacks, a real-world adversary may struggle with this matching task.
A crucial direction for future work is to analyze how these bounds degrade under a more realistic assumption of imperfect matching. 
Such an analysis would provide an even more complete picture of long-term privacy risks in dynamic, multi-iteration settings like DP-SGD.

% Ultimately, this line of research points toward a future where privacy guarantees are not one-size-fits-all.
% Instead of relying on a single $(\varepsilon,\delta)$-budget with limited interpretability, practitioners may eventually calibrate a portfolio of guarantees against specific, well-understood threats like membership inference, attribute inference, and data reconstruction. 
% Our work represents a step toward realizing this more granular and practical vision of privacy-preserving machine learning.
Ultimately, this line of research points toward a future where privacy guarantees are not one-size-fits-all, but are instead understood in the context of specific, well-defined threat models. 
Our work is a critical step in this direction. 
Instead of relying on a single, abstract $(\varepsilon,\delta)$-budget, a practitioner can now use our bounds in concert with prior work to understand how the risk of reconstruction itself changes based on adversarial assumptions. 
For a given privacy setting $\sigma$, $\Delta$, they can formally state the worst-case risk against a \say{from-scratch} AGIA adversary (using our bound) and the worst-case risk against an \say{identification} adversary (using prior bounds). 
This ability to map a single privacy setting to a range of concrete outcomes enables a far more holistic and actionable risk assessment, moving the field from abstract budgets to context-aware security decisions.

In conclusion, by shifting the focus from worst-case theoretical adversaries to the mechanics of practical attacks, this work provides a more nuanced and actionable framework for managing reconstruction risk in modern machine learning.

\section*{Acknowledgment}
We thank George Kaissis for the productive discussion of ideas in this manuscript.
\section*{LLM usage considerations}
LLMs were used for editorial purposes in this manuscript, and all outputs were inspected by the authors to ensure accuracy and originality.
% \clearpage

% \section*{References}

\bibliographystyle{plainnat}
\bibliography{main}

% \clearpage
\appendix
\subsection{Reconstruction Success Measured by the NCC} \label{sec::ncc}
The normalized-crossed correlation ($\NCC$) is a metric that measures the linear correlation between a target point $X=(x_1,...,x_N)^T$ and its estimated counterpart $Y_X= (y_{1,X},...,y_{N,X})^T$ instead of measuring their difference. 
Hence, as opposed to metrics such as the $\MSE$ or $\PSNR$, it is robust to linearly transformed inputs. 
The $\NCC$ is defined as follows:
\begin{definition}\label{def::ncc}
    The normalized cross-correlation ($NCC$) \citep{rodgers1988thirteen} between a fixed input point $X$ and its estimator $Y_X$ is defined as
    \begin{align*}
        \NCC(X, Y_X) &= \frac{\cov(X,Y_X)}{\sigma_X\sigma_{Y_X}},
    \end{align*} 
     where %$\cov(X,Y)$ is the covariance between $X$ and $Y$,
    %..., defined by
    \begin{equation*}
       \cov(X,Y_X) = \frac{1}{N}\sum_{i=1}^N (x_i-\ev_x[x])(y_{i,X}-\ev_{y_X}[y_X]),
    \end{equation*} 
    for $\ev_x[x], \ev_{y_X}[y_X]$ denoting the numerically obtained expected values within a sample.
    Moreover, $\sigma_X =\sqrt{\var(X)}$ and $\sigma_{Y_X}=\sqrt{\var(Y_X)}$, for $\var(X)$ and $\var(Y_X)$ being the sample variances of $X$ and $Y_X$, respectively. 
\end{definition}
We note that the $\NCC$ is equivalent to the Pearson's Correlation Coefficient, which is common in statistical testing.
High values of the $\NCC$ denote a high linear correlation between $X$ and $Y_X$, which, depending on the context, can imply a high similarity between $X$ and $Y_X$. 

% The $\NCC$ can be measured empirically using their entries $\{x_1, ... , x_N\}$ and $\{\hat{x}_{1}, ... , \hat{x}_{N}\}$ as sample sets, thereby computing the \textit{sample $\NCC$}. 
% The calculation of the sample $\NCC$ and a brief comment regarding its usefulness can be found in the appendix in Proposition \ref{prop:nccfirstcasetotal} and Remark \ref{remark::samplencc}, respectively. 
% However, in this section, we concentrate on the (theoretical) correlation between $X$ and $\hat{X}$.

We apply the following strategy to compute the (theoretical) correlation given by the $\NCC$ between a target $X$ and its reconstruction $\hat{X}$ \eqref{eq::lastestimatorplease}:
First, let the target $X$ be fixed.
We assume there exists a continuous, one-dimensional random variable $x$ distributed in such a way that $\{x_1, ... , x_N\}$ are probable samples from this distribution. 
In particular, we let $[\min_{i \in \{1,...,N\}}x_i, \max_{i \in \{1,...,N\}}x_i]$ be the support of $x$ and set $\var(x) \leq \Vert X \Vert_2^2/N$.
The latter assumption is motivated by the following fact: $X$ is an $N$-dimensional vector, hence, drawing a random element from its entries $\{x_1, ... , x_N\}$ can be represented by a discrete, uniformly distributed random variable with support $\{x_1, ... , x_N\}$ and variance bounded by $\Vert X \Vert_2^2/N$.
However, for our analysis, $x$ needs to be a continuous random variable.
Hence, we define $x$ to be continuous, but maintain the range of the support and the variance of its discrete counterpart.
Analogously, we construct the continuous, one-dimensional random variable $\hat{x}$ such that $\hat{x} :=  x +\zeta$, for $\zeta \sim \mathcal{N}(0, \sigma^2\Vert X \Vert_2^2)$ independent of $x$.
By definition, $\hat{x}_{1},...,\hat{x}_N$ are probable samples of the random variable $\hat{x}$.
It is easy to see, that measuring the correlation between $x$ and $\hat{x}$ is equivalent to measuring the correlation between $X$ and $\hat{X}$.

\begin{restatable}{proposition}{nccfirstcase} \label{prop:nccfirstcase}
Let $x$ and $\hat{x}$ be the two random variables as defined above.
Then,
   \begin{equation} 
         \NCC ( x, \hat{x} ) 
         =\sqrt{\frac{1}{1 + \sigma^2\Vert X\Vert_2^2/\var(x)}} 
         \leq \sqrt{\frac{1}{1 + \sigma^2 N}}. 
         \label{eq::nccfirstcasesinglexi} 
    \end{equation}
\end{restatable}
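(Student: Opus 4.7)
The plan is to directly compute the three ingredients that make up the Pearson correlation coefficient, namely $\cov(x,\hat{x})$, $\var(x)$, and $\var(\hat{x})$, and then apply the assumed variance bound $\var(x) \leq \Vert X\Vert_2^2/N$ to obtain the inequality. The proof is essentially a one-line calculation together with one monotonicity argument, so no real obstacle is anticipated; the only care needed is in handling the algebra cleanly so that the final expression matches the desired form.

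First, I would use the definition $\hat{x} = x + \zeta$ with $\zeta \sim \mathcal{N}(0,\sigma^2 \Vert X\Vert_2^2)$ independent of $x$. By bilinearity of covariance and independence,
\begin{equation*}
    \cov(x,\hat{x}) = \cov(x,x) + \cov(x,\zeta) = \var(x),
\end{equation*}
and similarly
\begin{equation*}
    \var(\hat{x}) = \var(x) + \var(\zeta) = \var(x) + \sigma^2\Vert X\Vert_2^2.
\end{equation*}
Substituting into \Cref{def::ncc} gives
\begin{equation*}
    \NCC(x,\hat{x}) = \frac{\var(x)}{\sqrt{\var(x)}\sqrt{\var(x)+\sigma^2\Vert X\Vert_2^2}} = \sqrt{\frac{\var(x)}{\var(x)+\sigma^2\Vert X\Vert_2^2}},
\end{equation*}
and dividing numerator and denominator inside the square root by $\var(x)$ yields the first equality in \eqref{eq::nccfirstcasesinglexi}.

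For the inequality, I would invoke the assumption $\var(x) \leq \Vert X\Vert_2^2/N$, which is equivalent to $\Vert X\Vert_2^2/\var(x) \geq N$. Since $\sigma^2 > 0$, this gives $1+\sigma^2\Vert X\Vert_2^2/\var(x) \geq 1+\sigma^2 N$, and by monotonicity of $t \mapsto \sqrt{1/t}$ on $(0,\infty)$ the upper bound $\sqrt{1/(1+\sigma^2 N)}$ follows. The main conceptual point worth flagging -- rather than an obstacle -- is that the expression $\Vert X\Vert_2^2/\var(x)$ must be interpreted using the convention $\var(x) \leq \Vert X\Vert_2^2/N$ introduced just before the proposition, since this is what links the continuous surrogate random variable $x$ back to the fixed vector $X$ whose entries seeded its construction.
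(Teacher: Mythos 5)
Your proposal is correct and follows essentially the same route as the paper's proof: decompose $\cov(x,\hat{x})=\var(x)$ and $\var(\hat{x})=\var(x)+\sigma^2\Vert X\Vert_2^2$ using independence of $\zeta$, simplify to $\sqrt{\var(x)/(\var(x)+\sigma^2\Vert X\Vert_2^2)}$, and then apply the assumed bound $\var(x)\leq\Vert X\Vert_2^2/N$ with monotonicity. No gaps; your handling of the variance convention matches the paper's setup exactly.
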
 
\begin{proof}
Using the definition of the $\NCC$ (see Definition \ref{def::ncc}) it follows:
    \begin{align}
        \NCC\left( x, \hat{x}\right) 
        &= \frac{\cov\left( x, \hat{x}\right)}{\sqrt{\var(x)}\sqrt{\var(\hat{x})}} \notag \\
        &=\frac{\cov\left( x, x\right) + \cov\left( x, \zeta\right)}{\sqrt{\var(x)}\sqrt{\var(x) + \var(\zeta)}} \notag \\
        &= \frac{\var(x)}{\sqrt{\var(x)}\sqrt{\var(x) + \var(\xi)}} \notag \\
        &= \sqrt{\frac{\var(x)}{\var(x) + \var(\zeta)}}\notag \\
        &= \sqrt{\frac{1}{1 + \var(\zeta)/\var(x)}}. \label{eq::boundnccncc}
    \end{align}

We recall that $\var(\zeta) = \sigma^2\Vert X\Vert_2^2$ by definition.
Thus, using the assumption that $\var(x)\leq \Vert X\Vert_2^2/N$, it follows from \eqref{eq::boundnccncc}:
\begin{equation*}
    \NCC\left( x, \hat{x}\right) = \sqrt{\frac{1}{1 + \sigma^2\Vert X\Vert_2^2/\var(x)}} \leq \sqrt{\frac{1}{1 + \sigma^2 N}}.
\end{equation*}
\end{proof}
According to \Cref{prop:nccfirstcase}, the $\NCC(x,\hat{x})$ depends on a key ratio: the reconstruction's internal variance ($\sigma^2 \Vert X \Vert_2^2$) divided by the original data's internal variance ($\var(x)$).
If this ratio is too low, it means the noise from the DP mechanism is not strong enough to break the linear relationship between $x$ and $\hat{x}$, or, conversely, between $X$ and its reconstruction $\hat{X}$.
Because of how the reconstruction $\hat{X}$ is defined (see \eqref{eq::lastestimatorplease}), a high correlation between $x$ and $\hat{x}$ implies $\hat{X}$ is a high-quality reconstruction of the target $X$.
Specifically, the $\NCC$ equals one (indicating a perfect linear relationship) if and only if the noise multiplier $\sigma$ is zero. 
This happens when no privacy-preserving noise is added. 
Incidentally, any non-linear transformation of the privatized gradient would distort this linear relationship. The reconstruction $\hat{X}$ is also optimal with respect to the $\NCC$.

For any non-zero noise ($\sigma > 0$), the $\NCC$ decreases as the noise level increases. 
However, \Cref{prop:nccfirstcase} shows that the $\NCC$ \textit{far more} dependent on the data's dimension, $N$.
Because the $\NCC$ is on the order of $\mathcal{O}(\sqrt{1/N})$, the correlation naturally approaches zero in high-dimensional settings, regardless of the specific amount of noise.
This has an essential implication: for high-dimensional data, a very low $\NCC$ value does not necessarily mean a poor reconstruction.
Instead, the $\NCC$ must be compared against a case-specific threshold, $\eta$, to determine if a reconstruction is informative.
For instance, if only reconstructions with $\NCC$ values exceeding $\eta$ are considered informative in a specific context, and $\eta$ is greater than the right-hand side of \eqref{eq::nccfirstcasesinglexi}, then no reconstruction constructed by the adversary can be deemed useful.
However, a key limitation of the $\NCC$ is that it cannot be used to compare the reconstruction success across scenarios with differing input dimensionalities $N$.

\subsection{Proofs}\label{sec::proofs}
In the following, we give the proofs for our theoretical results. 

\begin{aproposition}
\label{prop::norealizableunbiasedestimator}
     If the scaling factors $s_1,...,s_M$ are unknown, then there is no \say{realisable} unbiased estimator of the target $X$ that can be constructed solely using the observed privatized gradients $\widetilde{\nabla}_{W_1},...,\widetilde{\nabla}_{W_M}$. 
\end{aproposition}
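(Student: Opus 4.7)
The plan is to exploit a fundamental identifiability ambiguity in the observation model. Recall that the available data are
\begin{equation*}
    \widetilde{\nabla}_{W_j} \sim \mathcal{N}(s_j X,\, C^2 \sigma^2 I_N), \qquad j=1,\ldots,M,
\end{equation*}
so the joint distribution of $(\widetilde{\nabla}_{W_1},\ldots,\widetilde{\nabla}_{W_M})$ depends on $(s_1,\ldots,s_M,X)$ only through the products $(s_1 X,\ldots,s_M X)$. A ``realisable'' estimator $T$ is, by definition, a measurable function of the observations alone; it cannot depend on the unknown $s_j$.

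The key step is a rescaling argument. Fix any nonzero target $X$ and any scalars $s_1,\ldots,s_M$, and consider for any $\alpha \neq 0$ the alternative parameter configuration $X' := \alpha X$ together with $s_j' := s_j/\alpha$. Under both parameterizations the products $s_j' X' = s_j X$ coincide, so the joint law of the observations is \emph{identical}. Consequently, for any estimator $T$ built solely from the observations, the distribution of $T$ -- and hence its expectation -- must also coincide under the two configurations. Unbiasedness would then demand
\begin{equation*}
    X \;=\; \mathbb{E}[T] \;=\; X' \;=\; \alpha X
\end{equation*}
for every $\alpha \neq 0$, which forces $X = \mathbf{0}_N$. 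Thus no nontrivial realisable unbiased estimator exists.

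The only subtle point I expect is making precise what ``realisable'' rules out: the estimator must be a measurable function of $(\widetilde{\nabla}_{W_1},\ldots,\widetilde{\nabla}_{W_M})$ that does not access the unknown scaling factors (even implicitly, e.g., through oracle knowledge of $\beta_C(X)$ or $\partial g/\partial f(X)_j$). Once this is fixed, the invariance of the observation law under the one-parameter family $(X,s_j) \mapsto (\alpha X, s_j/\alpha)$ immediately blocks unbiasedness, and no detailed distributional calculation is required beyond noting that the Gaussian likelihood depends on $(s_j,X)$ only through the products $s_j X$.
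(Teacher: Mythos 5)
Your proof is correct and takes a genuinely different route from the paper's. The paper restricts attention to affine estimators $T(\widetilde{\nabla}_{W_j}) = A\widetilde{\nabla}_{W_j}+b$ (justifying this only informally, by asserting that an affine forward map can only be inverted affinely), computes $\ev[T] = As_jX+b$, and shows unbiasedness forces $A = s_j^{-1}I_N$, which is not realisable since $s_j$ depends on the unknown $X$; it then extends from a single observation to all $M$ by arguing that an unbiased estimator from $M$ observations would yield one from a single observation. Your non-identifiability argument -- the joint law depends on $(s,X)$ only through the products $s_jX$, so the configurations $(X,s)$ and $(\alpha X, s/\alpha)$ are observationally indistinguishable and unbiasedness at both would force $X=\alpha X$ -- is cleaner on two counts: it applies to arbitrary measurable estimators rather than only affine ones, and it handles all $M$ observations jointly, sidestepping the paper's reduction step (which, as stated, is not a valid implication in general: an unbiased estimator built from $M$ observations need not induce one built from a single observation). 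The one point you should make explicit, which you only gesture at, is that the rescaled configuration $(\alpha X, s/\alpha)$ must lie in the parameter family over which unbiasedness is demanded; since $s_j = \beta_C(X)\,\partial g/\partial f(X)_j$ is in reality a deterministic function of $X$, this amounts to treating $(X,s)$ as jointly free unknowns, which is precisely the content of the hypothesis that the scaling factors are unknown and is the same modelling assumption the paper makes when it forbids $A$ and $b$ from depending on $X$ or $s_j$.
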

\begin{proof}
    First, we show \textit{there is no deterministic transformation $T_X:\mathbb{R}^N\to \mathbb{R}^N$, such that $T_X\left(\widetilde{\nabla}_{W_j}\right)$, $j\in\{1,...,M\}$, is a \say{realisable} unbiased estimator of $X$.}
    
    Let $X$ be a fixed reconstruction target point and without loss of generality (w.l.o.g.) assume $X\neq \textbf{0}_N$, for $\textbf{0}_N$ the $N$-dimensional zero vector.
    Moreover, for a fixed $j\in\{1,...,M\}$, let $T_X\left(\widetilde{\nabla}_{W_j}\right)$ be a realisable, unbiased estimator of $X$.
    Since 
    \begin{equation*}
        \widetilde{\nabla}_{W_j} = s_jX + \xi_j, \qquad \xi_j \sim \mathcal{N}(\textbf{0}_N, C^2\sigma^2I_N),
    \end{equation*}
    $T_X\left(\widetilde{\nabla}_{W_j}\right)$ must be an affine function, because that is the only transformation that can invert an affine function.
    Hence, $T_X\left(\widetilde{\nabla}_{W_j}\right)$ has the following form:
    \begin{equation}
        T_X\left(\widetilde{\nabla}_{W_j}\right) = A \widetilde{\nabla}_{W_j} + b,
    \end{equation}
    for $A\in \mathbb{R}^{N\times N}$ invertible and constant, and $b\in \mathbb{R}^{N}$ constant. By definition of realisable estimators, $A$ and $b$ cannot be functions of $X$. 
    Next, we compute the expectation of $T_X\left(\widetilde{\nabla}_{W_j}\right)$ for $X$ fixed:
    \begin{align*}
        \ev_X\left[T_X\left(\widetilde{\nabla}_{W_j}\right)\right] 
        &= \ev_X\left[T_X\left(s_jX + \xi_j\right)\right]\\
        &= \ev_X\left[As_jX + A\xi_j+ b\right]\\
        &= As_jX +b.
    \end{align*}
    $T_X\left(\widetilde{\nabla}_{W_j}\right)$ is an unbiased estimator of $X$ if and only if $\ev_X\left[T_X\left(\widetilde{\nabla}_{W_j}\right)\right]=X$, which is equivalent to the following:
    \begin{align}
         As_jX +b = X 
        \iff  \left(s_jA-I_N\right) X + b = \textbf{0}_N.\label{eq::prop6independencefromx}
    \end{align}
    Since $b$ cannot depend on $X$, we cannot set $\left(s_jA-I_N\right) X = -b$ and solve the equation. 
    Therefore, the equations in \eqref{eq::prop6independencefromx} are satisfied if and only if
    \begin{align}
    \left(s_jA-I_N\right) X  &= \textbf{0}_N \quad \text{and} \quad b = \textbf{0}_N.
    \end{align}
    Since $X\neq \textbf{0}$, then $As_jX +b = X $ holds if and only if
    \begin{equation*}
        A = \frac{1}{s_j} I_N \quad \text{and} \quad b=\textbf{0}_N,
    \end{equation*}
    implying that $A$ depends on $s_j$, a quantity that can only be computed knowing $X$ (see \eqref{eq::definitionscalingfactor}) and contradicting the assumption that $T_X\left(\widetilde{\nabla}_{W_j}\right)$ is a realisable estimator. 
    %i.\@e.\@, from $X$ independent.
    Therefore, for any $j\in\{1,...,M\}$, there is no realisable unbiased estimator of $X$ that can be constructed using $\widetilde{\nabla}_{W_j}$.

    If there was an unbiased estimator of $X$ constructed with $\widetilde{\nabla}_{W_1},...,\widetilde{\nabla}_{W_M}$, then there would exists at least one $j\in\{1,...,M\}$ such that an unbiased estimator of $X$ can be constructed with $\widetilde{\nabla}_{W_j}$.
    However, since there is no $j\in\{1,...,M\}$ such that an unbiased estimator of $X$ can be constructed with $\widetilde{\nabla}_{W_j}$, it follows by contraposition, that there is no unbiased estimator of $X$ than can be constructed with $\widetilde{\nabla}_{W_1},...,\widetilde{\nabla}_{W_M}$.
\end{proof}

The following proposition serves as an auxiliary result to obtain \Cref{propminimalvarianceestimatornew}:
\begin{aproposition}
\label{prop::weaklawoflargenumbersnothold}
The coordinte-wise variance of sample average $\hat{X}_M$ stated in \eqref{eq::firstdefestimator} is lower bounded by $\sigma^2\Vert X\Vert_2^2$ for $M\to \infty$ and for all $X\in\mathcal{D}\setminus \{\textbf{0}_N\}$.
\end{aproposition}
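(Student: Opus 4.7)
The plan is to work directly with the coordinate-wise variance
\[
V_M \;=\; \frac{1}{M^2}\sum_{j=1}^M \frac{C^2\sigma^2}{s_j^2}
\]
from \eqref{eq::distributionestimatornew}, expand the scaling factors via $s_j=\beta_C(X)\alpha_j$ with $\alpha_j:=\partial g(X,f(X))/\partial f(X)_j$, and then obtain the bound by combining a Cauchy–Schwarz estimate on $\bigl(\sum_j\alpha_j^2\bigr)\bigl(\sum_j 1/\alpha_j^2\bigr)\geq M^2$ with the structural decomposition of $\|G_X\|_2^2$ coming from \eqref{eq::backwardpassnonprivateone}–\eqref{eq::backwardpassnonprivatetwo}. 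Concretely, the weight-gradient block contributes $\|\alpha_j X\|_2^2=\alpha_j^2\|X\|_2^2$ and the bias-gradient block contributes $\alpha_j^2$, so
\[
\|G_X\|_2^2 \;=\; (\|X\|_2^2+1)\sum_{j=1}^M \alpha_j^2 \;+\;\|G_{X,P}\|_2^2.
\]

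Next I would split on whether clipping is active. If $\|G_X\|_2\leq C$, then $\beta_C(X)=1$ and $V_M=\tfrac{C^2\sigma^2}{M^2}\sum_j 1/\alpha_j^2$; the structural identity gives $\sum_j\alpha_j^2\leq C^2/(\|X\|_2^2+1)$, whence Cauchy–Schwarz yields $\sum_j 1/\alpha_j^2 \geq M^2(\|X\|_2^2+1)/C^2$ and therefore $V_M\geq \sigma^2(\|X\|_2^2+1)$. If instead $\|G_X\|_2>C$, then $\beta_C(X)=C/\|G_X\|_2$, so $V_M=\tfrac{\sigma^2\|G_X\|_2^2}{M^2}\sum_j 1/\alpha_j^2$; dropping $\|G_{X,P}\|_2^2\geq 0$ in the decomposition and again applying Cauchy–Schwarz gives
\[
V_M \;\geq\; \frac{\sigma^2}{M^2}\,(\|X\|_2^2+1)\!\left(\sum_{j=1}^M\alpha_j^2\right)\!\left(\sum_{j=1}^M \frac{1}{\alpha_j^2}\right) \;\geq\; \sigma^2(\|X\|_2^2+1).
\]

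In both regimes one therefore obtains the uniform (in $M$) bound $V_M\geq\sigma^2(\|X\|_2^2+1)>\sigma^2\|X\|_2^2$ for every $X\in\mathcal{D}\setminus\{\mathbf{0}_N\}$, which in particular implies $\liminf_{M\to\infty}V_M\geq\sigma^2\|X\|_2^2$ and shows that the coordinate-wise variance cannot be driven to zero by enlarging $M$. The main obstacle is bookkeeping across the two clipping regimes and keeping the role of the non-reconstruction component $G_{X,P}$ correctly oriented (it can only increase $\|G_X\|_2$, so dropping it gives a valid lower bound on $V_M$ in the clipped case, while in the unclipped case it tightens the upper bound on $\sum_j\alpha_j^2$); the Cauchy–Schwarz inequality is the key quantitative device linking the harmonic and arithmetic behaviour of the $\alpha_j$ so that clipping necessarily inflates the averaged noise as $M$ grows.
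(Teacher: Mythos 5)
Your proof is correct, and it reaches the stated bound by a genuinely different (and in some respects cleaner) route than the paper. The paper's proof treats the problem as a constrained minimization: it writes $\Vert G_X\Vert_2^2 = \sum_j \alpha_j^2 \Vert X\Vert_2^2 + \Vert\text{Rest}\Vert_2^2$, minimizes $\sum_j 1/\alpha_j^2$ subject to the resulting constraint on $\sum_j\alpha_j^2$ (the minimizer being the equal allocation $\alpha_j^2 = (\Vert G_X\Vert_2^2-\Vert\text{Rest}\Vert_2^2)/(M\Vert X\Vert_2^2)$), and then lets $M\to\infty$ so that clipping activates and the $\Vert\text{Rest}\Vert_2^2/\Vert G_X\Vert_2^2$ term vanishes. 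You instead apply Cauchy--Schwarz, $\bigl(\sum_j\alpha_j^2\bigr)\bigl(\sum_j\alpha_j^{-2}\bigr)\geq M^2$, directly in each clipping regime; this is the same inequality whose equality case underlies the paper's equal-allocation minimizer, but used this way it yields a bound that is \emph{uniform in} $M$ rather than only asymptotic, and it sidesteps the paper's somewhat delicate treatment of $\Vert\text{Rest}\Vert_2$ as $M$-independent (which is awkward since $\text{Rest}$ contains $\sum_j\alpha_j^2$). Your decomposition also keeps the bias-gradient block explicit, which is why you land on the strictly stronger constant $\sigma^2(\Vert X\Vert_2^2+1)$. One caveat worth flagging: that $+1$ is contingent on the bias gradients actually being part of $G_X$; in the optimal attack of \Cref{propminimalvarianceestimatornew} the adversary removes the bias entirely, in which case your structural identity degrades to $\Vert G_X\Vert_2^2 \geq \Vert X\Vert_2^2\sum_j\alpha_j^2$ and your argument (unchanged otherwise) yields exactly $V_M\geq\sigma^2\Vert X\Vert_2^2$ --- so there is no contradiction with the attainability of $\sigma^2\Vert X\Vert_2^2$, but you should state the bound as $\sigma^2\Vert X\Vert_2^2$ if you want it to cover every admissible architecture choice. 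As in the paper, your argument implicitly requires all $\alpha_j\neq 0$, which is needed anyway for $\hat{X}_M$ to be defined.
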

\begin{proof}
    Let the iteration step be fixed and observe the sample mean $\hat{X}_M$ given in \eqref{eq::firstdefestimator} with distribution described in \eqref{eq::distributionestimatornew}.
    Without loss of generality (W.\@l.\@o.\@g.\@), we assume $X\neq \textbf{0}_N$ and that $\frac{\partial g(X,f(X))}{\partial f(X)_j}>0$ for all $j\in\{1,...,M\}$.
    Then, consider the $i$th entry of $\hat{X}_M$, $i \in \{1,...,N\}$, particularly, its variance given by 
    \begin{equation}
        \var(\hat{X}_{M,i}) = \frac{1}{M^2} \sum_{j=1}^M \frac{C^2\sigma^2}{\beta_C(X)^2\frac{\partial g(X,f(X))}{\partial f(X)_j}^2}, \label{eq::varianceentrywisewlln}
    \end{equation}
    using the definition of the scale $s_j$ \eqref{eq::definitionscalingfactor}.
   It is easy to see that \eqref{eq::varianceentrywisewlln} decreases for decreasing $\sum_{j=1}^M \frac{1}{\beta_C(X)^2\frac{\partial g(X,f(X))}{\partial f(X)_j}^2}$.
   Therefore, minimising
    \begin{align}
     \sum_{j=1}^M \frac{1}{\beta_C(X)^2\frac{\partial g(X,f(X))}{\partial f(X)_j}^2}
     \label{eq::minimalsumwlln}
    \end{align}
    with respect to $\frac{\partial g(X,f(X))}{\partial f(X)_1},...,\frac{\partial g(X,f(X))}{\partial f(X)_M}$ minimises the variance in \eqref{eq::varianceentrywisewlln}.
    Note that doing so does not affect the multiplicative term $C^2\sigma^2/M$.
    By definition of the global concatenated gradient $G_X$ (see \eqref{eq::globalgradient}), the squared norm of $G_X$ is given by
        \begin{align}
        &\Vert G_X \Vert_2^2  \notag \\
        &= \sum_{j=1}^M \frac{\partial g(X,f(X))}{\partial f(X)_j}^2   \Vert X \Vert_2^2 + \sum_{j=1}^M \frac{\partial g(X,f(X))}{\partial b_j}^2 + \Vert G_{X,P}\Vert_2^2. \label{eq::normglobalgradientwlln}
    \end{align}
    Set $\Vert\text{Rest}\Vert_2^2:= \sum_{j=1}^M \frac{\partial g(X,f(X))}{\partial b_j}^2 +\Vert G_{X,P}\Vert_2^2$.
    Then, we can reformulate \eqref{eq::normglobalgradientwlln} and obtain the following constraint regarding $\frac{\partial g(X,f(X))}{\partial f(X)_1},...,\frac{\partial g(X,f(X))}{\partial f(X)_M}$:
    \begin{align}
        \sum_{j=1}^M \frac{\partial g(X,f(X))}{\partial f(X)_j}^2 = \frac{\Vert G_X \Vert_2^2 -\Vert\text{Rest}\Vert_2^2}{  \Vert X \Vert_2^2 }.  \label{eq::reformulationnormglobalgradientwlln}
    \end{align}
    Minimising the variance in \eqref{eq::varianceentrywisewlln} with respect to $\frac{\partial g(X,f(X))}{\partial f(X)_1},...,\frac{\partial g(X,f(X))}{\partial f(X)_M}$ under the constraint given in \eqref{eq::reformulationnormglobalgradientwlln}, does not affect the norm of the global gradient $G_X$ and, thus, it does not affect the value of $\beta_C(X)$. 
    Therefore, minimising \eqref{eq::minimalsumwlln} with respect to $\frac{\partial g(X,f(X))}{\partial f(X)_1},...,\frac{\partial g(X,f(X))}{\partial f(X)_M}$ under the constraint given in \eqref{eq::reformulationnormglobalgradientwlln} is equivalent to minimising 
    $\sum_{j=1}^M \frac{1}{\frac{\partial g(X,f(X))}{\partial f(X)_j}^2}$ with respect to $\frac{\partial g(X,f(X))}{\partial f(X)_1},...,\frac{\partial g(X,f(X))}{\partial f(X)_M}$ under \eqref{eq::reformulationnormglobalgradientwlln}.
    Hence, setting $y_j =\frac{\partial g(X,f(X))}{\partial f(X)_j}^2 $, for $j \in \{1,...,M\}$, we have an optimisation problem of the following form:
        \begin{align}
        \text{Minimise} \;
        \sum_{j=1}^M &\frac{1}{y_j}\; \text{ for } \\
        \sum_{j=1}^M y_j = \frac{\Vert G_X \Vert_2^2 -\Vert\text{Rest}\Vert_2^2}{  \Vert X \Vert_2^2 }\; &\text{ and } \;y_1,...,y_M>0,
        \label{eq::minimisationproblemwlln}
    \end{align}
    \eqref{eq::minimisationproblemwlln} is a well-known minimisation problem with solution given by $y_j = \frac{\Vert G_X \Vert_2^2-\Vert\text{Rest}\Vert_2^2}{M \Vert X \Vert_2^2 }$ for all $j \in \{1,...,M\}$.
    However, if needed, a proof of the statement can be obtained using the gradient of the function in \eqref{eq::minimisationproblemwlln} to construct the direction of the steepest descent and combining this with the given constraints in \eqref{eq::minimisationproblemwlln}.
    Hence, setting 
    \begin{equation}
        \frac{\partial g(X,f(X))}{\partial f(X)_j}^2= \frac{\Vert G_X \Vert_2^2-\Vert\text{Rest}\Vert_2^2}{M \Vert X \Vert_2^2 } \label{eq::solutionminimisationproblemwlln}
    \end{equation}
    for all $j \in \{1,...,M\}$, minimises the variance given in \eqref{eq::varianceentrywisewlln} with respect to $\frac{\partial g(X,f(X))}{\partial f(X)_1},...,\frac{\partial g(X,f(X))}{\partial f(X)_M}$. 

    We insert the choice \eqref{eq::solutionminimisationproblemwlln} into the variance \eqref{eq::varianceentrywisewlln} and obtain
    \begin{align}
        \var(\hat{X}_{M,i}) &\geq \frac{1}{M^2} \sum_{j=1}^M \frac{C^2\sigma^2M \Vert X \Vert_2^2 }{\beta_C(X)^2(\Vert G_X \Vert_2^2-\Vert\text{Rest}\Vert_2^2))}\notag \\
        &= \frac{C^2\sigma^2 \Vert X \Vert_2^2 }{\beta_C(X)^2(\Vert G_X \Vert_2^2-\Vert\text{Rest}\Vert_2^2))}\label{eq::varianceentrywiseminimalwlln},
    \end{align}
    for $i \in \{1,...,N\}$.

    Recall the definition of the clipping term $\beta_C(X)$ given in \eqref{eq::definitionbeta}.
    Using \eqref{eq::solutionminimisationproblemwlln}, we can see that the norm of the global gradient $\Vert G_X\Vert_2$ is linearly increasing in $M$.
    Thus, there exist $\hat{M}$, such that for all $M\geq \hat{M}$, $\Vert G_X\Vert_2 \geq C$ and $\beta_C(X) = \frac{C}{\Vert G_X\Vert_2}$. 
    Hence, by \eqref{eq::varianceentrywiseminimalwlln}
    \begin{align}
        &\lim_{M \to \infty} \var(\hat{X}_{M,i}) \\
        &\geq C^2 \sigma^2 \Vert X\Vert_2^2 \cdot \lim_{M \to \infty} \frac{1}{\beta_C(X)^2(\Vert G_X \Vert_2^2-\Vert\text{Rest}\Vert_2^2))} \notag\\
        &=C^2 \sigma^2 \Vert X\Vert_2^2 \cdot \lim_{M \to \infty} \frac{1}{C^2-\frac{\Vert\text{Rest}\Vert_2^2}{\Vert G_X \Vert_2^2}}\notag\\
        &= \sigma^2 \Vert X\Vert_2^2. \label{eq::convergencevarianceentrywiseminimalwlln}
    \end{align}
    Equality \eqref{eq::convergencevarianceentrywiseminimalwlln} holds because $\Vert \text{Rest}\Vert_2$ is independent of $M$.
    Lastly, if $X\neq \textbf{0}_N$, then $\sigma^2 \Vert X\Vert_2^2 >0$.
\end{proof}

\propminimalvarianceestimatornew*
\begin{proof} \label{proofpropminimalvarianceestimatornew}
    Let the iteration step be fixed and observe the sample mean $\hat{X}_M$ given in \eqref{eq::firstdefestimator} with distribution described in \eqref{eq::distributionestimatornew}.
    Then, consider the $i$th entry of $\hat{X}_M$, particularly, its variance given by 
    \begin{equation}
        \var(\hat{X}_{M,i}) = \frac{1}{M^2} \sum_{j=1}^M \frac{C^2\sigma^2}{\beta_C(X)^2\frac{\partial g(X,f(X))}{\partial f(X)_j}^2}, \label{eq::varianceentrywiseminimalstepone}
    \end{equation}
    for $i \in \{1,...,N\}$ using the definition of $s_j$ \eqref{eq::definitionscalingfactor}.
    W.\@l.\@o.\@g.\@, let $X\neq \textbf{0}_N$.
    We have shown in the proof of Proposition \ref{prop::norealizableunbiasedestimator} that the choice
    \begin{equation}
   \frac{\partial g(X,f(X))}{\partial f(X)_j}^2= \frac{\Vert G_X \Vert_2^2-\Vert\text{Rest}\Vert_2^2}{M \Vert X \Vert_2^2 } \label{eq::important}
    \end{equation}
    for all $j \in \{1,...,M\}$, minimises the variance given in \eqref{eq::varianceentrywiseminimalstepone} with respect to $\frac{\partial g(X,f(X))}{\partial f(X)_1},...,\frac{\partial g(X,f(X))}{\partial f(X)_M}$. 
    Let us set 
    \begin{equation}
       \frac{\partial g(X,f(X))}{\partial f(X)_1}^2= \frac{\Vert G_X \Vert_2^2-\Vert\text{Rest}\Vert_2^2}{M \Vert X \Vert_2^2 },
    \end{equation} 
and insert this choice into the variance \eqref{eq::varianceentrywiseminimalstepone}:
    \begin{equation}
        \var(\hat{X}_{M,i}) \geq \frac{C^2\sigma^2}{M}  \frac{1}{\beta_C(X)^2\frac{\partial g(X,f(X))}{\partial f(X)_1}^2}, \label{eq::varianceentrywiseminimalstepthree}
    \end{equation}
    for $i \in \{1,...,N\}$.
    If $\Vert G_X \Vert_2$ is fixed, it follows from \eqref{eq::important} that $\frac{\partial g(X,f(X))}{\partial f(X)_1}^2$ increases with decreasing norm $\Vert\text{Rest}\Vert_2$.
    However, $\Vert\text{Rest}\Vert_2$ cannot be bounded or quantified  for any iteration step without specific knowledge of the neural network.
    Thus, the adversary cannot minimise $\Vert\text{Rest}\Vert_2$ without manipulating some layers of the network. 
    If they manipulate these layers, we see that \eqref{eq::varianceentrywiseminimalstepthree} is minimal whenever $\frac{\partial g(X,f(X))}{\partial f(X)_1}$ is maximal, i.\@e.\@, whenever $\Vert\text{Rest}\Vert_2 = 0$.
    $\Vert\text{Rest}\Vert_2 = 0$ occurs for all $X\in \mathcal{D}$ and all iteration steps when the adversary replaces the entire network by the linear layer $f$ (see \Cref{sec:inversionattack}, specifically Equation \eqref{eq::linearlayer}) and sets the bias term $b$ to be equal to $\textbf{0}_M$.
    In such a case the neural network is given by the linear layer $f(X) = WX$ and a loss function which we denote by $\mathcal{L}: \mathbb{R}^N\times\mathbb{R}^M \to \mathbb{R}$.
    As a consequence, $\frac{\partial g(X,f(X))}{\partial f(X)_1} = \frac{\partial \mathcal{L}(X,f(X))}{\partial f(X)_1}$.
    In particular, \eqref{eq::important} implies
    \begin{equation}
        \frac{\partial g(X,f(X))}{\partial f(X)_1}^2= \frac{\partial \mathcal{L}(X,f(X))}{\partial f(X)_1}^2= \frac{\Vert G_X \Vert_2^2}{M \Vert X \Vert_2^2 }.
        \label{eq::choice}
    \end{equation}    
    Inserting \eqref{eq::choice} into the right hand side of \eqref{eq::varianceentrywiseminimalstepthree} further bounds the variance $\var(\hat{X}_{M,i})$: 
    \begin{equation}
        \var(\hat{X}_{M,i}) 
        \geq  \frac{C^2 \sigma^2 \Vert X \Vert_2^2}{\beta_C(X)^2\Vert G_X \Vert_2^2} 
        \label{eq::varianceentrywiseminimalstepfour},
    \end{equation}    
    for all $i \in \{1,...,N\}$.

    Now, we observe the lower bound in \eqref{eq::varianceentrywiseminimalstepfour}.
    Naturally, the right hand side of \eqref{eq::varianceentrywiseminimalstepfour} is lowest when the denominator in \eqref{eq::varianceentrywiseminimalstepfour} is highest.
    By definition of the clipping term $\beta_C(X)$ (see \eqref{eq::definitionbeta}), the product $\beta_C(X)^2\Vert G_X \Vert_2^2$ is upper bounded by $C^2$, delivering
     \begin{equation}
        \var(\hat{X}_{M,i}) 
        \geq   \sigma^2 \Vert X \Vert_2^2
        \label{eq::varianceentrywiseminimalstepfive},
    \end{equation}    
    for $i \in \{1,...,N\}$.   
    In particular, no change in the parameters or architecture of the network can increase the product $\beta_C(X)^2\Vert G_X \Vert_2^2$ beyond $C^2$ to further decrease the lower bound given in \eqref{eq::varianceentrywiseminimalstepfive}.
    Therefore, we assume, the adversary chooses $M$ and $ \frac{\partial \mathcal{L}(X,f(X))}{\partial f(X)_1}$ such that that $\Vert G_X \Vert_2^2\geq C^2$ for as many data points $X$ as possible.
    Using \eqref{eq::choice}, $\Vert G_X \Vert_2^2 \geq C^2$ implies
    \begin{equation}
       M \frac{\partial \mathcal{L}(X,f(X))}{\partial f(X)_1}^2 \geq \frac{C^2}{\Vert X\Vert_2^2}. \label{eq::varianceentrywiseminimalstepseven}
    \end{equation}
    Next, we consider two cases, when $\min_{X \in  \mathcal{D}}\Vert X\Vert_2>0$ and $\min_{X \in  \mathcal{D}}\Vert X\Vert_2=0$.
    If $\min_{X \in  \mathcal{D}}\Vert X\Vert_2>0$, then 
    \begin{align}
        \Vert G_X \Vert_2^2 &\geq C^2 && \forall X \in \mathcal{D} \notag\\
        \iff M \frac{\partial \mathcal{L}(X,f(X))}{\partial f(X)_1}^2 &\geq \frac{C^2}{\min_{X \in  \mathcal{D}}\Vert X\Vert_2^2}  &&\forall X \in \mathcal{D}. \label{eq::varianceentrywiseminimalstepsix}
    \end{align}    
    $M$, $\min_{X \in  \mathcal{D}}\Vert X\Vert_2$ and $C$ are fixed during training and do not changed from iteration to iteration. 
    Thus, \eqref{eq::varianceentrywiseminimalstepsix} holds for all $X \in \mathcal{D}$ if $\frac{\partial \mathcal{L}(X,f(X))}{\partial f(X)_1}$ is constant for all $X$ and all iteration steps, implying $\mathcal{L}$ is an affine function of $f(X)$.
    If $C>\min_{X \in  \mathcal{D}}\Vert X\Vert_2$, then choosing $M\geq \left\lceil\frac{C}{\min_{X \in  \mathcal{D}}\Vert X\Vert_2}\right\rceil$ and the loss function $\mathcal{L}:\mathbb{R}^N\times\mathbb{R}^M\to\mathbb{R}$ to be $\mathcal{L}(X,f(X))= \textbf{1}_M^Tf(X)$, where $\textbf{1}_M$ is the $M$-dimensional 1-vector, delivers the sample average $\hat{X}_M$ with the lowest variance per entry given by 
    \begin{equation}
        \var(\hat{X}_{M,i}) 
       % =\frac{1}{M^2} \sum_{j=1}^M \frac{C^2\sigma^2}{\beta_C(X)^2} 
       % = \frac{1}{M^2} \sum_{j=1}^M \frac{C^2\sigma^2M \Vert X\Vert_2^2}{C^2}
       = \sigma^2 \Vert X \Vert_2^2, 
%\geq \sigma^2 \min_{X\in\mathcal{D}}\Vert X \Vert_2^2.
    \end{equation}   
    where $\lceil \cdot \rceil$ denotes the function that rounds up its argument to the nearest integer.
    If $C \leq \min_{X \in  \mathcal{D}}\Vert X\Vert_2$, then choosing $M\geq1$ and the loss function to be $\mathcal{L}(X,f(X))= f(X)$ delivers the sample average $\hat{X}_M$ with the lowest variance per entry given by 
    \begin{equation}
        \var(\hat{X}_{M,i}) 
%=\frac{1}{M^2} \sum_{j=1}^M \frac{C^2\sigma^2}{\beta_C(X)^2}
=\sigma^2 \Vert X \Vert_2^2 \geq \sigma^2 C^2.
    \end{equation}      
All in all, we conclude that if $\min_{X \in  \mathcal{D}}\Vert X\Vert_2>0$, replacing the subpart of the neural network given by $g$ by the loss function $\mathcal{L}(X,f(X))= \textbf{1}_M^Tf(X)$ and setting $M \geq \max\left( 1, \left\lceil \frac{C}{\min_{X\in \mathcal{D}}\Vert X\Vert_2}\right\rceil\right)$ minimises the variance $\var(\hat{X}_{M,i})$ for all $i\in\{1,...,N\}$.
    
    If $\min_{X \in  \mathcal{D}}\Vert X\Vert_2=0$, then there is no choice for $\frac{\partial \mathcal{L}(f((X))}{\partial f(X)_1}$ or $M$ such that \eqref{eq::varianceentrywiseminimalstepseven} holds for all $X\in\mathcal{D}$.
    However, in such a case, w.\@l.\@o.\@g.\@, we assume that the adversary sets the loss function to be $\mathcal{L}(X,f(X))= f(X)$ and chooses $M$ to ensure that \eqref{eq::varianceentrywiseminimalstepseven} holds for all $X\in\mathcal{D}$ with $X\neq \textbf{0}_N$.
    In such a case, the adversary sets $M= \left\lceil \frac{C}{\min_{X\in \mathcal{D}\setminus\{\textbf{0}_N\}}\Vert X\Vert_2}\right\rceil$, analogously as the argumentation above.
\end{proof}

The following lemma serves as an auxiliary result to obtain \Cref{prop::efficientestimatornew}:
\begin{alemma}\label{prop::efficientestimatorentrywisenew}
    For all $j \in \{1,...,N\}$, the $j$th entry $\hat{x}_j$ of the estimator $\hat{X}$ is a \textit{(fully) efficient} estimator for the $j$th entry $x_j$ of the target $X$.
\end{alemma}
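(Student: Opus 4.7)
The plan is to establish full efficiency of $\hat{x}_j$ by showing it attains the Cram\'er--Rao lower bound (CRLB) for $x_j$ coordinate-wise. Unbiasedness is already immediate from the distribution in \eqref{eq::lastestimatorplease}, namely $\hat{X}\sim\mathcal{N}(X,\sigma^2\Vert X\Vert_2^2 I_N)$, which gives $\ev[\hat{x}_j]=x_j$ and $\var(\hat{x}_j)=\sigma^2\Vert X\Vert_2^2$ for every $j\in\{1,\dots,N\}$. So the only remaining task is to compare this variance to the CRLB.

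Next, I would isolate the information the adversary actually uses to estimate $x_j$. Under the optimal attack of \Cref{propminimalvarianceestimatornew} we have $s_1=\cdots=s_M=\beta_C(X)=:s$, and the $j$-th coordinates $(\widetilde{\nabla}_{W_1})_j,\dots,(\widetilde{\nabla}_{W_M})_j$ are i.i.d.\ draws from $\mathcal{N}(s\,x_j,\,C^2\sigma^2)$, independent of the coordinates used for any other entry (by \eqref{eq::distributiongradient1} and the diagonal covariance). Treating $s$ as known (as justified in \Cref{sec:optimalityattack} for the worst-case analysis), the log-likelihood for $x_j$ from $M$ such samples yields Fisher information
\begin{equation*}
    I_M(x_j)=\frac{M\,s^2}{C^2\sigma^2}.
\end{equation*}
Plugging in the value of $s$ implied by \eqref{eq::choice} under the optimal network (so that $Ms^2=C^2/\Vert X\Vert_2^2$) gives $I_M(x_j)=1/(\sigma^2\Vert X\Vert_2^2)$, hence the CRLB for any unbiased estimator of $x_j$ based on these observations is $\sigma^2\Vert X\Vert_2^2$.

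Finally, I would observe that $\hat{x}_j=(1/(Ms))\sum_{k=1}^M(\widetilde{\nabla}_{W_k})_j$ is the sample-mean form of the scaled observations and has variance $C^2\sigma^2/(Ms^2)=\sigma^2\Vert X\Vert_2^2$, matching the CRLB exactly. Thus $\hat{x}_j$ attains the CRLB and is therefore fully efficient for $x_j$. I expect the only subtle point to be justifying that $s$ may be treated as known (so that the Fisher-information computation is legitimate coordinate-wise); this follows from the worst-case assumption of perfect knowledge of the scaling factors articulated just before \eqref{eq::firstdefestimator}, and from the fact that under \Cref{propminimalvarianceestimatornew} the scales collapse to the single quantity $\beta_C(X)$. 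The remaining steps are routine exponential-family / Gaussian-mean calculations.
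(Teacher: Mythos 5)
Your proof is correct and follows essentially the same route as the paper's: unbiasedness read off from the Gaussian distribution of $\hat{X}$ in \eqref{eq::lastestimatorplease}, followed by attainment of the Cram\'er--Rao bound for a Gaussian location family with known variance. The only difference is that you derive the Fisher information $I_M(x_j)=Ms^2/(C^2\sigma^2)$ from the $M$ raw observations $(\widetilde{\nabla}_{W_k})_j$ rather than asserting it directly for the aggregated statistic $\hat{x}_j$ as the paper does; since the sample mean is sufficient for the Gaussian mean, the two computations coincide, and yours is if anything the slightly more careful version because it establishes efficiency relative to the full data the adversary observes.
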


\begin{proof}
The estimator $\hat{X}$ given in \eqref{eq::lastestimatorplease} is normally distributed with mean $X$ and covariance matrix given by $\sigma^2\Vert X\Vert_2^2 I_N$.
Let $\hat{x}_i$, $i \in \{1,...,N\}$, denote the $i$th entry of $\hat{X}$.
Then, by distribution of $\hat{X}$, $\hat{x}_1,...,\hat{x}_N$ are independent, normally distributed with mean $x_1,...,x_N$, respectively, and same variance given by $\sigma^2\Vert X\Vert_2^2 $.
Thus, for all $i \in \{1,...,N\}$, $\hat{x}_i$ is an unbiased estimator of the $i$th entry of the target $X$.

Moreover, applying the Cram\'{e}r-Rao bound for scalar unbiased estimators, we compute a lower bound for the variance of any the estimator of $\hat{x}_i$, $i \in \{1,...,N\}$:
\begin{equation} \label{eq::fisherinformationmatrimultivariate}
    \var_X\left(\hat{x}_i\right) \geq I(x_i)^{-1} = \sigma^2\Vert X\Vert_2^2, 
\end{equation}
where $I(x_i)$ denotes the Fisher information matrix that measures the amount of information the rescaled, observable normally distributed random variables $\hat{x}_i$ carries about its unknown mean $x_i$.
Since this matrix is well-known in literature, we do not provide a proof for the right hand side of the equality in \ref{eq::fisherinformationmatrimultivariate}.

Since for all $i \in \{1,...,N\}$, $\hat{x}_i$ is an unbiased estimator of $x_i$ that achieves the Cram\'{e}r-Rao bound, it is a \textit{(fully) efficient} estimator of $x_i$ achieving the smallest variability in terms of the variance.
\end{proof}

\propefficientestimatornew*
\begin{proof}
    Let $\hat{Y}=(\hat{y}_1,...,\hat{y}_N)^T$ denote any estimator of $X$.
    Then,
    \begin{align}
        &\ev_X[\MSE_X(X,\hat{Y})] \\
        &= \ev_X \left[ \frac{|| X - \hat{Y}||_2^2}{N} \right] \notag \\ 
        &= \frac{1}{N} \sum_{i =1}^N \ev_X \left[  (x_i-\hat{y}_i)^2\right] \notag \\ 
        &= \frac{1}{N} \sum_{i =1}^N \ev_{x_i} \left[  (x_i-\hat{y}_i)^2\right] \notag \\ 
        &= \frac{1}{N} \sum_{i =1}^N \left( \ev_{x_i} \left[  x_i-\hat{y}_i\right]^2 + \var_{x_i}(x_i-\hat{y}_i) \right) \notag \\ 
        &= \frac{1}{N} \sum_{i =1}^N \left( \Bias_{x_i}(x_i, \hat{y}_i)^2 + \var_{x_i}(\hat{y}_i) \right). \label{eq::mseestimatorbiased}
    \end{align}
    For all unbiased estimators, the expected $\MSE$, as given in \ref{eq::mseestimatorbiased}, is solely determined by the sum of the variances of each entry $\hat{y}_i$.
    Therefore, by Lemma \ref{prop::efficientestimatorentrywisenew}, $\hat{X}$ is the unbiased estimator that minimises \ref{eq::mseestimatorbiased}.
    In other words, $\hat{X}$ is the unbiased estimator that achieves the lowest expected $\MSE$.
    Such estimators are called \textit{minimum variance unbiased estimators} in the literature and achieve the smallest variability in terms of the variance.
    
    Lastly, we compute the expected $\MSE$ between $X$ and $\hat{X}$:
    \begin{align*}
        \ev_X[\MSE_X(X,\hat{X})] = \frac{1}{N} \sum_{i =1}^N \var_{x_i}(\hat{x}_i) = \sigma^2 \Vert X \Vert_2^2.
    \end{align*}    

Since the adversary only observes one privatized version of the gradient $\tilde{\nabla}_W$ they can use to construct an estimator for $X$, it is easy to see that $\hat{X}$ is a sufficient statistic for estimating $X$.
Moreover, by Proposition \ref{prop::efficientestimatornew}, $\hat{X}$ is the unbiased estimator, which uses the sufficient statistic $\tilde{\nabla}_W$ as input, that achieves the lowest expected $\MSE_X(X, \hat{X})$.
Such unbiased estimators achieve the lowest possible $\MSE$ and have the smallest variability in terms of their variance.
Thus, using the $\MSE$ as an optimality criterion, $\hat{X}$ is the optimal estimator for $X$.
Since lower values of $\MSE_X(X, \hat{X})$ denote high similarity between $X$ and $\hat{X}$, we conclude that $\hat{X}$ is the best achievable reconstruction for $X$. 
\end{proof}

\distrmsefirstcase*
\begin{proof}
   We can compute the $\MSE$ between $X$ and its reconstruction $\hat{X}$ as the mean error over their components: 
\begin{align} 
    \MSE_X(X,\hat{X}) 
    &= \frac{1}{N}\sum_{i=1}^N(x_i-\hat{x}_i)^2 \notag \\
&= \frac{1}{N}\sum_{i=1}^N\tilde{\xi_i}^2, \qquad \tilde{\xi_i}\sim \mathcal{N}(0,\sigma^2\Vert X\Vert_2^2) \notag \\
    &= \frac{1}{N}\sum_{i=1}^N \sigma^2\Vert X\Vert_2^2\left(\frac{1}{\sigma\Vert X\Vert_2}\tilde{\xi}_i\right)^2 \notag \\
    &= \frac{\sigma^2\Vert X\Vert_2^2}{N}\sum_{i=1}^N \rho_i^2,\label{eq::mse_distribution_proof1}
\end{align}
where $\rho_i := \frac{1}{\sigma\Vert X\Vert_2}\xi_i$, for $i \in \{1,...,N\}$. 
If $X$ is fixed, $\rho_1, ... ,\rho_N$ are pairwise independent random variables with $\rho_i \sim \mathcal{N}\left(0, 1\right)$ for all $i \in \{1,...,N\}$.
Hence,
\begin{align}  \label{eq::mse_distribution_proof2}
    \sum_{i=1}^N \rho_i^2 &\sim \chi^2_N,
\end{align}
where $\chi^2_N$ denotes the central chi-squared distribution with $N$ degrees of freedom.
Thus,
\begin{equation*}
    \MSE_X(X, \hat{X}) \overset{d}{=} \frac{\sigma^2\Vert X\Vert_2^2}{N} \cdot Y \quad \text{with} \quad Y \sim \chi^2_N.
\end{equation*}
Then, for $\eta$ given
\begin{align*}
   \mathbb{P}_{\hat{X}}\left(\MSE_X(X, \hat{X}) \leq \eta\right) &= \gamma \\
    \iff \mathbb{P}_{Y}\left(\frac{\sigma^2\Vert X\Vert_2^2}{N} \cdot Y \leq \eta\right) &= \gamma.
\end{align*}
Since $Y$ is a centered chi-squared distributed random variable with $N$ degrees of freedom, its cumulative distribution function can be computed via the regularized gamma function $\Gamma_R$.
Hence,  
\begin{align*}
    &\mathbb{P}_{Y}\left(\frac{\sigma^2\Vert X\Vert_2^2}{N} \cdot Y \leq \eta\right) \\
    &= P_{Y}\left(Y \leq \frac{ N \eta}{\sigma^2\Vert X\Vert_2^2}\right) \\
    &=\Gamma_R\left(\frac{N}{2}, \frac{N \eta}{2 \sigma^2\Vert X\Vert_2^2}\right) 
\end{align*}
implies
\begin{align}
    \mathbb{P}_{\hat{X}}\left( \MSE_X(X,\hat{X}) \leq \eta \right) = \Gamma_R \left(\frac{N}{2}, \frac{N \eta}{2\sigma^2\Vert X\Vert_2^2}\right).
\end{align}
\end{proof}

\reromsefirstcase*

\begin{proof}

Consider the CDF of the $\MSE_X(X,\hat{X})$ for $X\in\mathcal{D}\setminus\{\textbf{0}_N\}$ given Equation \eqref{eq::almostrero} in \Cref{prop::distrmsefirstcase}.
Since the regularized gamma function is increasing in its second argument, using \Cref{prop::distrmsefirstcase}, it follows that 
\begin{equation} \label{eq::reromsefirstcaseforallx}
    \begin{split}
    &\mathbb{P}_{\hat{X}}\left( \MSE_X(X,\hat{X}) \leq \eta \right) \\
    &= \Gamma_R \left(\frac{N}{2}, \frac{N \eta}{2 \sigma^2\Vert X\Vert_2^2}\right) \\
    &\leq \Gamma_R \left(\frac{N}{2}, \frac{N \eta}{2\sigma^2 \min_{X\in\mathcal{D}\setminus\{\textbf{0}_N\}}\Vert X \Vert_2^2}\right),
    \end{split}
\end{equation} 
for all reconstruction target points $X\in\mathcal{D}\setminus\{\textbf{0}_N\}$.
Moreover, due to the optimality of the estimator $\hat{X}$, it holds
\begin{equation}  \label{eq::reromseoptimalestimatorupperbound}
    \begin{split}
    &\mathbb{P}_{Y_X'}\left( \MSE_X(X,Y_X') \leq \eta \right) \\
    &\leq \mathbb{P}_{\hat{X}}\left( \MSE_X(X,\hat{X}) \leq \eta \right)
   \end{split}
\end{equation}
for all possible reconstructions $Y_X'$.
Therefore, combining \eqref{eq::reromsefirstcaseforallx} and \eqref{eq::reromseoptimalestimatorupperbound}, we conclude that $\mathcal{M}$ \eqref{eq::modifieddpmechanism} is $(\eta, \gamma(\eta))$-reconstruction robust with respect to the $\MSE$ for any reconstruction and $\gamma(\eta) = \Gamma_R\left(\frac{N}{2}, \frac{N \eta}{2\sigma^2 \min_{X\in\mathcal{D}\setminus\{\textbf{0}_N\}}\Vert X\Vert_2^2} \right)$.

\end{proof}

\psnrdistributionfirstcase*
\begin{proof}
The cumulative distribution function (CDF) of the $\PSNR$ can be calculated using the CDF of the $\MSE$.
In particular, this implies that we can also compute probabilistic bounds for the $\PSNR$ using \Cref{prop::distrmsefirstcase}.
Let $\eta$ be given.
Then,
\begin{alignat*}{3}
        & \PSNR_X(X,\hat{X}) \geq \eta \\
        \iff& 10 \log_{10}\left( \frac{(\max(X)-\min(X))^2}{\MSE(X,\hat{X})}\right) \geq \eta \\
        \iff& \log_{10}((\max(X)-\min(X))^2) -\log_{10}(\MSE(X,\hat{X}) \geq \frac{\eta}{10} \\
        \iff& \log_{10}((\max(X)-\min(X))^2)-\frac{\eta}{10} \geq \log_{10}(\MSE(X,\hat{X}))  \\
        \iff&  (\max(X)-\min(X))^210^{-\frac{\eta}{10}} \geq \MSE(X,\hat{X}).   
\end{alignat*}
Thus, setting
\begin{equation}
    \hat{\eta}(\eta) =10^{-\frac{\eta}{10}}(\max(X)-\min(X))^2,
\end{equation}
it follows from \Cref{prop::distrmsefirstcase} that
\begin{align}
        &\mathbb{P}_{\hat{X}}(\PSNR_X(X,\hat{X}) \geq \eta) \notag  \\
        &=\mathbb{P}_{\hat{X}}(\MSE_X(X,\hat{X}) \leq \hat{\eta}(\eta)) \notag\\ 
        &\leq \Gamma_R \left(\frac{N}{2}, \frac{N \hat{\eta}(\eta)}{2}\frac{1}{\sigma^2 \min_{X \in \mathcal{\mathcal{D}\setminus\{\textbf{0}_N\}}}\Vert X \Vert_2^2}\right). \label{eq::1000}
\end{align}

We note that \eqref{eq::1000} is still dependent on the target value $X$ due to $\hat{\eta}(\eta)$.
To remove this dependency, we find an upper bound for $\hat{\eta}(\eta)$:
\begin{equation*}
\begin{split}
    &\hat{\eta}(\eta) \\
    &=10^{-\frac{\eta}{10}}(\max(X)-\min(X))^2 \\
    &\leq 10^{-\frac{\eta}{10}}\left(\max_{X \in \mathcal{D}}\max(X)-\min_{X \in \mathcal{D}}\min(X)\right)^2\\
    &:= \tilde{\eta}(\eta).
\end{split}
\end{equation*}
Since the regularized gamma function $\Gamma_R$ is increasing with respect to the second argument, it follows:
\begin{equation*}
\begin{split}
    &\mathbb{P}_{\hat{X}}(\PSNR_X(X,\hat{X}) \geq \eta) \\
    &\leq \Gamma_R \left(\frac{N}{2}, \frac{N \tilde{\eta}(\eta)}{2\sigma^2\min_{X \in \mathcal{D}\setminus\{\textbf{0}_N\}}\Vert X \Vert_2^2}\right),
\end{split}
\end{equation*}
for all $X\in \mathcal{D}\setminus\{\textbf{0}_N\}$.
\end{proof}

\psnrrerofirstcase*
\begin{proof}
    On the one hand, by the optimality of the reconstruction $\hat{X}$ with respect to the $\PSNR$, it holds that 
    \begin{equation*}
        \mathbb{P}_{Y_X}(\PSNR_X(X,Y_X') \geq \eta) \leq \mathbb{P}_{\hat{X}}(\PSNR_X(X,\hat{X}) \geq \eta),
    \end{equation*}
    for any analytic reconstruction $Y_X'$.
    On the other hand, by \Cref{propo:psnrdistibutionfirstcase},
    \begin{equation*}
        \begin{split}
            &\mathbb{P}_{\hat{X}}(\PSNR_X(X,\hat{X}) \geq \eta) \\
            &= \mathbb{P}_{\hat{X}}(-\PSNR_X(X,\hat{X}) \leq -\eta) \\
            &\leq \Gamma_R \left(\frac{N}{2}, \frac{N \tilde{\eta}(\eta)}{2\sigma^2 \min_{X \in \mathcal{D}\setminus\{\textbf{0}_N\}}\Vert X \Vert_2^2}\right),
        \end{split}
    \end{equation*}
    for $\tilde{\eta}(\eta)$ as defined in \Cref{propo:psnrdistibutionfirstcase} and $\Gamma_R$ being the regularized gamma function.
    Therefore, using \Cref{def::rero}, we conclude that the DP mechanism $\mathcal{M}$ is $(-\eta, \tilde{\gamma}(\tilde{\eta}(\eta)))$-reconstruction robust for $\tilde{\gamma}(\tilde{\eta}(\eta))= \Gamma_R \left(\frac{N}{2}, \frac{N \tilde{\eta}(\eta)}{2\sigma^2\min_{X \in \mathcal{D}\setminus\{\textbf{0}_N\}}\Vert X \Vert_2^2}\right)$ with respect to the negative $\PSNR$, i.\@e.\@, $-\PSNR$.
    
    Lastly, since the regularized gamma function $\Gamma_R$ is increasing with respect to the second argument, if $C\leq \min_{X \in \mathcal{D}}\Vert X \Vert_2$, then the DP mechanism $\mathcal{M}$ \eqref{eq::modifieddpmechanism} is $(-\eta, \tilde{\gamma}'(\tilde{\eta}(\eta)))$-reconstruction robust for $\tilde{\gamma}'(\tilde{\eta}(\eta))= \Gamma_R \left(\frac{N}{2}, \frac{N \tilde{\eta}(\eta)}{2\sigma^2C^2}\right)$ with respect to the negative $\PSNR$, i.\@e.\@, $-\PSNR$.
\end{proof}

\convergenceundermultipletimesteps*
\begin{proof}
Let $\hat{X}_j$, for $j \in \{1,...,k\}$, denote reconstructions of the same data sample $X$ obtained separately and independently by performing multiple attacks. 
The distribution of each reconstruction $\hat{X}_j$ is given in Equation \ref{eq::lastestimatorplease} (\Cref{sec:optimalityattack}).
Let $\hat{x}_{j,i}$, for $j \in \{1,...,k\}$ denote the $i$th coordinate of $\hat{X}_j$.
Then, the expectation and variance of the $i$th component of $\hat{X}_\emph{avg}$, i.\@e.\@, $\hat{x_i}_{\emph{avg}}$, can be computed in the following way:
\begin{equation*}
    \ev[\hat{x_i}_{\emph{avg}}] = \ev\left[ \frac{1}{k}\sum_{j=1}^k\hat{x}_{j,i} \right] = \frac{1}{k} \sum_{j=1}^k\ev[\hat{x}_{j,i}] = x_i,
\end{equation*}
and
\begin{align*}
    &\var(\hat{x_i}_{\emph{avg}}) \\
    &= \var\left( \frac{1}{k}\sum_{j=1}^k\hat{x}_{j,i} \right) \\
    &= \frac{1}{k^2} \sum_{j=1}^k\var(\hat{x}_{j,i}) \\
    &= \frac{1}{k^2}k \sigma^2\Vert X\Vert_2^2.
\end{align*}

\end{proof}

\clearpage
\subsection{Experimental details}\label{sec::detailtightness}

The experiments in \Cref{sec::tightness} are designed to empirically validate the tightness of our theoretical bounds against practical reconstruction attacks across a range of conditions. All code is provided with the submission to ensure full reproducibility.

\subsubsection{Setup}
Our experimental setup mirrors the theoretical model from \Cref{sec::results}. The total gradient $G_{X,P}$ is composed of gradients from both a main reconstruction network $f$ and an auxiliary network $g$, as defined in \Cref{eq::globalgradient}. This ensures that the gradient component $G_P$, which does not depend on the private data $X$, is non-empty, contributing to the overall gradient norm $\Vert G_X\Vert_2$ and reflecting a more realistic, complex training scenario.

\subsubsection{Dataset and Preprocessing}
We use 500 randomly sampled images from the CIFAR-10 dataset \citep{krizhevsky2009learning}. To effectively visualize the probability distributions of the reconstruction error, we reduce the dimensionality of the input data. Specifically, images are converted to grayscale and resized to $2\times2$ pixels, resulting in an input dimensionality of $N=4$. This is necessary because in very high dimensions, the probability density functions of the error become extremely concentrated (or degenerate), making them difficult to plot and interpret. To ensure the uniform applicability of our bounds across all samples, we normalize and clip each image to have an $\ell_2$-norm of $1.01$, with a gradient clipping norm of $C=1$.

\subsubsection{Models and Evaluation}
To demonstrate that our bounds hold irrespective of architectural complexity, we evaluate three distinct scenarios:
\begin{itemize}
    \item \textit{Optimal:} A minimal linear network that directly implements the optimal attack from \Cref{sec::results}.
    \item \textit{Linear (1M params):} A simple linear network with a large number of additional, non-contributing parameters to simulate a more complex model.
    \item \textit{ResNet-101:} A standard, deep convolutional network (ResNet-101) to represent a realistic, state-of-the-art architecture.
\end{itemize}

For each setup, we perform reconstructions across a logarithmic range of noise multipliers ($\sigma$) and measure the resulting Mean Squared Error (MSE) and Peak Signal-to-Noise Ratio (PSNR). As shown in \Cref{fig:tightness}, the empirical distributions of these error metrics closely match our theoretical bounds, confirming their tightness. Furthermore, we demonstrate in \Cref{fig:tightness_high_n} that the bounds become even tighter for higher-dimensional data, confirming that our low-dimensional visualization represents a conservative view of the bounds' accuracy.
\begin{figure}[H]
    \centering
    \includegraphics[width=\linewidth]{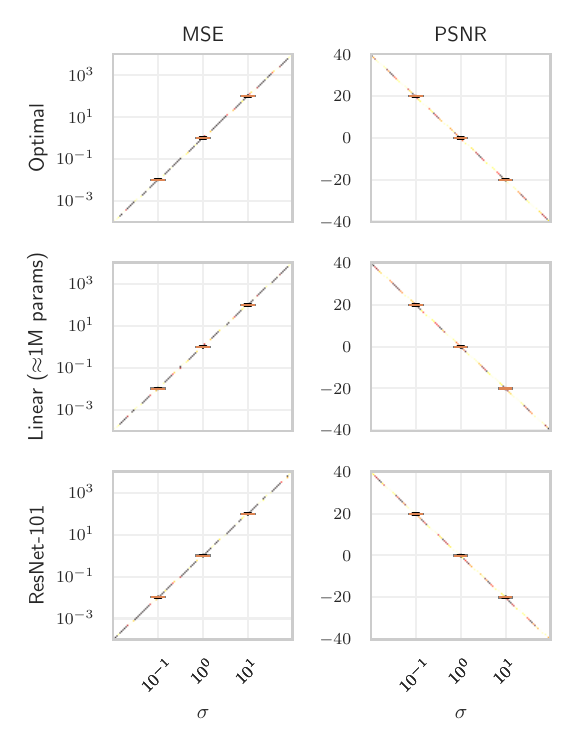}
    \caption{
    Comparing reconstruction error distributions to empirical results for high-dimensional data. 
    Setup equivalent to \Cref{fig:tightness}, but with $N=3\,072$. 
    }
    \label{fig:tightness_high_n}
\end{figure}

\end{document}